\documentclass{article}
\usepackage{amsmath, amssymb, amsthm, amsfonts}
\usepackage{lmodern}
\usepackage{arxiv}
\usepackage{physics}
\usepackage[T1]{fontenc}
\usepackage{graphicx}
\usepackage{xcolor}
\usepackage{wrapfig} %
\usepackage{subcaption}
\usepackage{authblk}

\newcommand\blfootnote[1]{%
  \begingroup
  \renewcommand\thefootnote{}%
  \footnotetext{#1}%
  \endgroup
}

\usepackage{amsmath,amsfonts,bm}

\def\eqref#1{equation~\ref{#1}}

\def\1{\mathbf{1}}

\DeclareMathAlphabet{\mathsfit}{\encodingdefault}{\sfdefault}{m}{sl}
\SetMathAlphabet{\mathsfit}{bold}{\encodingdefault}{\sfdefault}{bx}{n}

\newcommand{\E}{\mathbb{E}}

\def\w{{\mathbf{w}}}

\def\N{\mathcal{N}}

\def\I{\mathbf{I}}

\newtheorem{theorem}{Theorem}
\newtheorem{proposition}{Proposition}
\newtheorem{lemma}{Lemma}

\newtheorem{corollary}{Corollary} 
\newcommand{\T}{\mathcal{T}}
\newcommand{\bigO}{\mathcal{O}}

\renewcommand{\vec}{\text{vec}}
\newcommand{\diag}{\text{diag}}
\newcommand{\kron}{\otimes}

\usepackage[utf8]{inputenc} %
\usepackage{enumitem}
\usepackage[T1]{fontenc}    %
\usepackage{hyperref}       %
\usepackage{url}            %
\usepackage{booktabs}       %
\usepackage{amsfonts}       %
\usepackage{nicefrac}       %
\usepackage{microtype}      %
\usepackage{xcolor}         %
\usepackage{cleveref}

\title{Compute Efficiency and Serial Runtime Tradeoffs \\for Stochastic Momentum Methods}

\author[*,1,2,$\dagger$]{Depen Morwani}
\author[*,1,2]{Alexandru Meterez}
\author[*,1,2]{Pranav Ajit Nair}
\author[1,2]{Sham Kakade}

\affil[1]{Harvard University}
\affil[2]{Kempner Institute at Harvard University}
\begin{document}

\maketitle

\begin{abstract}
Stochastic momentum methods such as heavy ball (HB), Nesterov momentum, and
variants of Accelerated SGD (ASGD)~\citep{kidambi2018insufficiency} are widely
used in modern training, but their stochastic benefits depend on two distinct
quantities: serial runtime, the number of iterations needed to reach a target
accuracy, and compute efficiency (CE), the inverse total gradient-query or FLOP
cost. Larger batches reduce serial runtime without hurting CE only when the
contraction gap grows linearly with batch size.

We study stochastic HB and ASGD for consistent linear regression with Gaussian
covariates and prove finite-dimensional, discrete-time \emph{lower} bounds on their batch-size tradeoffs. Our first result shows that HB does \emph{not} improve the CE frontier over SGD for arbitrary spectra; rather, it preserves SGD-level CE over a larger batch-size window, allowing larger batches to reduce serial runtime
until HB reaches its deterministic accelerated scale. This window can be a
factor \(\sqrt{\kappa}\) larger than the SGD critical batch size. For ASGD, the
picture is more spectrum-dependent: for rapidly decaying power-law spectra,
ASGD improves small-batch CE over HB/SGD, but as batch size grows it trades this
CE advantage for improved serial runtime. Synthetic linear-regression
experiments verify these qualitative regimes, including near-overlap of ASGD
and HB for slowly decaying spectra and the predicted CE--serial tradeoff for
rapidly decaying spectra.
\end{abstract}

\section{Introduction}
\blfootnote{\hspace{-6mm}${}^*$: Equal contribution. \\ $\dagger$: Work done while at Harvard.}
Training deep neural networks is typically done via a first order optimization method, such as gradient descent (GD), or in the case of large scale datasets and models that are memory prohibitive through minibatch Stochastic Gradient Descent (SGD)~\citep{robbins1951stochastic}, with the latter method taking gradient steps on a random subsample of the population. While most open source large scale models~\citep{grattafiori2024llama,team2026kimi,guo2025deepseek,yang2025qwen3} are trained with preconditioned SGD, typically Adam variants~\citep{kingma2014adam, loshchilov2017decoupled, vyas2024soap}, or the more recent approximate second order methods~\citep{martens2015optimizing,gupta2018shampoo,jordan2024muon}, they all generally bake in a version of momentum. 
Assuming a noiseless quadratic model, under exact gradients classical momentum methods such as heavy ball~\citep{polyak1964some} and Nesterov's accelerated gradient~\citep{nesterov1983method,nesterov2013introductory} improve over the GD rate of $\bigO(\kappa)$ to $\bigO(\sqrt{\kappa})$. However, this acceleration does not extend to the stochastic regime~\citep{kidambi2018insufficiency}, limiting their practicality for current training regimes. 
\begin{figure}[!htp]
    \centering
    \includegraphics[width=1.0\linewidth]{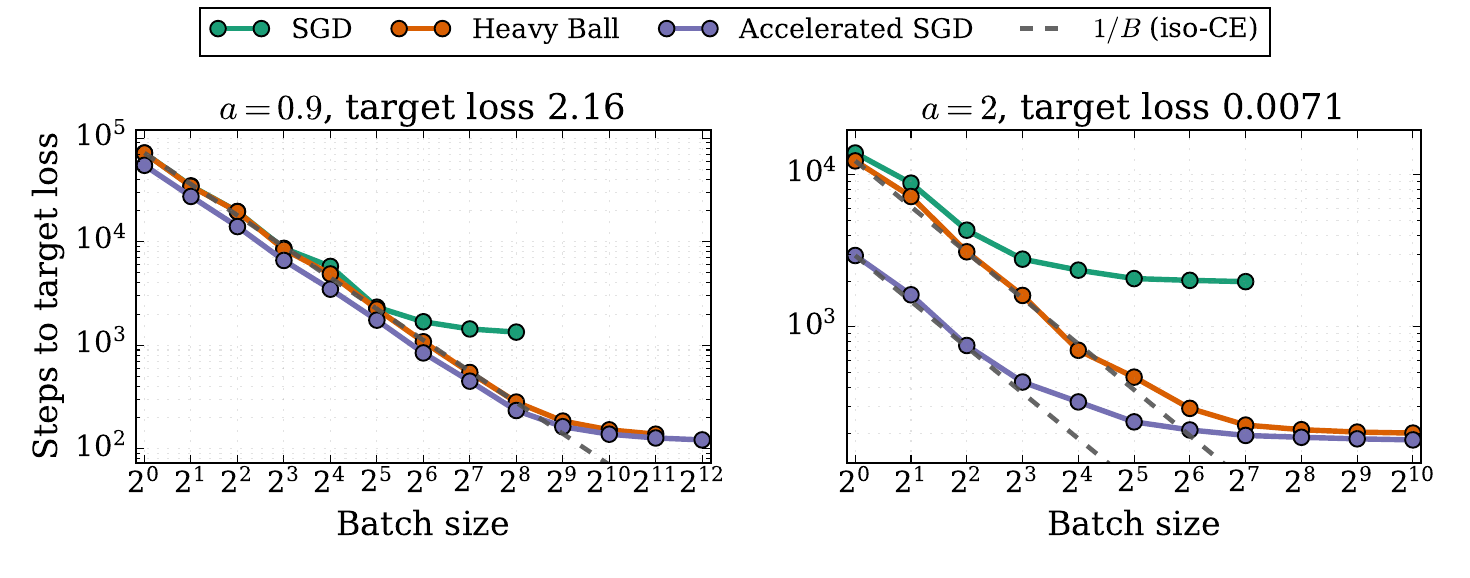}
    \caption{Batch-size tradeoffs for stochastic momentum methods in synthetic linear regression with power-law spectra. Plots show the number of serial steps needed to reach a fixed target loss as the batch size varies, after tuning hyperparameters for each method. (Left) For slow decaying power law spectra, HB and ASGD do not yield better CE, but they decrease the serial runtime. (Right) For fast decaying spectra, ASGD has better CE than HB, but has a shorter linear scaling window before it begins trading off CE for serial runtime. Experimental details are provided in Section~\ref{sec:experiments}.}
    \label{fig:synth_plot}
\end{figure}
Empirically, several works have found that momentum primarily accelerates in the large batch size regime, while in the small batch case its effects vanish~\citep{shallue2019measuring,fu2023and,kunstner2023noise,wang2023marginal,zhang2019algorithmic}. Most recently,~\citet{marek2025small} conducted a large scale empirical study in large language model (LLM) pretraining, and have found that at small batch sizes momentum does not bring any benefits over SGD. These works indicate a subtle interplay between compute efficiency and serial runtime in stochastic acceleration. Theoretically, there has been partial progress in understanding stochastic momentum at batch size $1$, through the lower bounds established by~\citet{kidambi2018insufficiency} and the asymptotic characterization of~\citet{lee2022trajectory, ferbach2025dimension}. We refer the reader to Section~\ref{sec:related_work} where we discuss these works further.
Based on the the theoretical understanding, new momentum schemes have been proposed combining multiple momentum buffers that provably achieve acceleration at batch size $1$~\citep{jain2018accelerating,vaswani2019fast,liu2018accelerating,gupta2024nesterov, ferbach2025dimension, pagliardini2024ademamix}, a family of methods we will refer to as Accelerated SGD (ASGD)~\citep{kidambi2018insufficiency,jain2018accelerating,ferbach2025dimension,lee2022trajectory}. While these methods achieve accelerated rates and thus better compute efficiency at batch size $1$, this regime is far from practical from a serial runtime point of view, especially in the current pretraining setups where the number of tokens can scale to several trillions. 

Thus, in large-scale pretraining, the key question is not only whether an optimizer improves sample complexity, but also whether it can maintain those gains at the large batch sizes needed to reduce serial training time. While keeping the data size fixed, one can linearly increase the batch size until the critical batch size (CBS)~\citep{zhang2024does,shallue2019measuring,merrill2025critical,meterez2025seesaw} while achieving a linear decrease in the serial cost. While the CBS for SGD has been extensively studied empirically and theoretically in linear regression, to the best of our knowledge there are no results on the CBS of momentum algorithms. In this paper we provide a nearly full characterization of the serial runtime tradeoffs by providing lower bounds for \emph{arbitrary} spectra for heavy ball, and power law spectra for ASGD.

\subsection{Contributions}
In the deterministic, full-batch regime, the classical story is clear:
on a \(\kappa\)-conditioned quadratic, GD has contraction gap of order
\(\kappa^{-1}\), while HB achieves the accelerated gap of order
\(\kappa^{-1/2}\). Equivalently, HB reduces the number of serial
iterations by a factor of order \(\sqrt{\kappa}\). Our goal is to
understand what remains of this acceleration in the stochastic
mini-batch regime, where we care about \emph{both} the
serial runtime and the compute efficiency. 

A useful way to compare stochastic momentum methods is to separate these two
quantities. For a given algorithm, let $s^*(H,B)$
denote the best-achievable spectral radius, or contraction factor, at batch size
\(B\) and with covariance spectrum \(H\) (suppressing dependence on
the algorithm). We define the corresponding contraction gap by
\[
\alpha(B):=1-s^*(H,B).
\]

\begin{table}[!ht]
\centering
\small
\begin{tabular}{lccc}
\toprule
Algorithm & Batch-size regime & Serial runtime & Compute efficiency \\
\midrule
SGD
& $1 \le B \lesssim \frac{\Tr(H)}{\lambda_{\max}}$
& $\frac{\Tr(H)}{B\lambda_{\min}}$
& $\frac{\lambda_{\min}}{\Tr(H)}$
\\
SGD
& $B \gtrsim \frac{\Tr(H)}{\lambda_{\max}}$
& $\kappa$
& $\frac{1}{B\kappa}$
\\
\midrule
HB
& $1 \le B \lesssim \frac{\Tr(H)}{\lambda_{\max}}\sqrt{\kappa}$
& $\frac{\Tr(H)}{B\lambda_{\min}}$
& $\frac{\lambda_{\min}}{\Tr(H)}$
\\
HB
& $B \gtrsim \frac{\Tr(H)}{\lambda_{\max}}\sqrt{\kappa}$
& $\sqrt{\kappa}$
& $\frac{1}{B\sqrt{\kappa}}$
\\
\bottomrule
\end{tabular}
\vspace{2mm}
\caption{
Serial-runtime and compute-efficiency implications of our lower bounds for SGD
and HB under arbitrary spectra. Here
$\kappa=\lambda_{\max}/\lambda_{\min}$, serial runtime scales as
$\alpha(B)^{-1}$, and compute efficiency scales as $\alpha(B)/B$, where
$\alpha(B)=1-s^*(H,B)$ is the best-achievable spectral gap. SGD and HB have the
same compute-efficiency frontier $\lambda_{\min}/\Tr(H)$, but HB preserves this
frontier over a larger batch-size window, up to
$B\asymp (\Tr(H)/\lambda_{\max})\sqrt{\kappa}$, thereby reducing serial runtime
from $\kappa$ to $\sqrt{\kappa}$ before saturating.
}
\vspace{-5mm}
\label{tab:sgd-hb-ce-serial}
\end{table}

In the full-batch limit, this gap satisfies $
\alpha_{\mathrm{GD}}(\infty)\asymp \kappa^{-1}$ and 
$\alpha_{\mathrm{HB}}(\infty)\asymp \kappa^{-1/2}$,
which is the classical deterministic acceleration of HB over GD.

The number of serial steps needed to reach a fixed target error scales as
\(1/\alpha(B)\), while the total gradient-query cost, or total FLOPs up to
constants, scales as \(B/\alpha(B)\). Equivalently, the compute efficiency
(CE) scales as
\[
\frac{\alpha(B)}{B}.
\]
Thus, higher CE means fewer total FLOPs to reach a target accuracy. If
\(\alpha(B)\) grows linearly with \(B\), increasing the batch size reduces
serial runtime without hurting CE. If \(\alpha(B)\) grows sublinearly with
\(B\), larger batches still improve serial runtime, but only by sacrificing CE.

Our first main result characterizes how this deterministic acceleration appears as the batch size is
varied, where we compare HB to SGD, and is summarized in Table~\ref{tab:sgd-hb-ce-serial}. The result holds for \emph{arbitrary} spectra.

\begin{center}
\textbf{HB improves the serial runtime over SGD, but it
does not improve the CE frontier.}
\end{center}

More precisely, we prove a lower bound showing that HB cannot improve over the
optimal CE scaling of SGD at any batch size. At small batch sizes, HB has the
same linear-scaling behavior as SGD. Above the SGD critical batch size, however,
HB can continue to convert larger batches into fewer serial steps while
preserving the same CE scale. This continues until HB reaches its deterministic
accelerated scale. Thus, HB's benefit is not a better best-achievable CE
frontier, but rather a larger batch-size window over which SGD-level CE can be
used to buy improved serial runtime.

In practice, several newer stochastic momentum methods are also being
considered. The majority of these schemes are paramterically
equivalent to ASGD, which is known to improve CE over SGD
at small batch sizes. Our second main result characterizes how this CE
improvement changes with batch size.

\begin{center}
\textbf{ASGD improves small-batch CE, but its CE-preserving linear-scaling
window is shorter.}
\end{center}

We show that ASGD's CE gains are concentrated in the small-batch regime. Unlike
the case for HB, the serial-runtime picture for ASGD is more dependent on the
full spectra. For rapidly decaying spectra, ASGD has an initial
CE-preserving linear-scaling regime. After this regime, its contraction gap
continues to improve with batch size, but only \emph{sublinearly}. Thus,
increasing the batch size still reduces ASGD's serial runtime, but it does so
by spending some of ASGD's small-batch CE advantage.

More concretely, for rapidly decaying spectra, the resulting picture is as follows. At small
batch sizes, ASGD has better CE than HB, consistent with known upper bounds at
\(B=1\). As \(B\) increases, ASGD moves along a CE--serial tradeoff curve:
serial runtime improves, but CE decreases relative to the small-batch optimum.
Eventually, ASGD reaches the deterministic accelerated serial-runtime scale.
Beyond this point, further increasing the batch size no longer improves serial
runtime and only decreases CE at the usual \(1/B\) rate. 

\begin{table}[!htp]
\centering
\small
\begin{tabular}{lccc}
\toprule
Algorithm & Batch-size regime & Serial runtime & Compute efficiency \\
\midrule
SGD
& $B \gtrsim 1$
& $d^a$
& $B^{-1}d^{-a}$
\\
\midrule
HB
& $1 \le B \lesssim d^{a/2}$
& $\frac{d^a}{B}$
& $d^{-a}$
\\
HB
& $B \gtrsim d^{a/2}$
& $d^{a/2}$
& $B^{-1}d^{-a/2}$
\\
\midrule
ASGD
& $B \asymp 1$
& $d^{\frac{a^2}{2a-1}}$
& $d^{-\frac{a^2}{2a-1}}$
\\
ASGD
& $1 \lesssim B \lesssim d^{1/2}$
& $d^{\frac{a^2}{2a-1}}B^{-\frac{a}{2a-1}}$
& $d^{-\frac{a^2}{2a-1}}B^{-\frac{a-1}{2a-1}}$
\\
ASGD
& $B \gtrsim d^{1/2}$
& $d^{a/2}$
& $B^{-1}d^{-a/2}$
\\
\bottomrule
\end{tabular}
\vspace{2mm}
\caption{
Serial-runtime and compute-efficiency implications of our lower bounds under
power-law spectra $\lambda_i\asymp i^{-a}$ with $a>1$. Serial runtime scales as
$\alpha(B)^{-1}$ and compute efficiency scales as $\alpha(B)/B$, where
$\alpha(B)=1-s^*(H,B)$ is the best-achievable spectral gap. For these spectra,
SGD has only a constant-size linear-scaling window. HB preserves SGD-level CE
over the larger window $B\lesssim d^{a/2}$, while ASGD improves small-batch CE
but loses CE as $B$ grows through the intermediate regime.
}
\vspace{-5mm}
\label{tab:batch-regimes}
\end{table}
Thus, ASGD improves
the CE--serial frontier most strongly at small batch sizes; for larger batch
sizes the advantage is more modest, because part of the small-batch CE gain has
already been converted into serial-time improvement. For more slowly decaying
spectra, ASGD and HB are comparable.

Figure~\ref{fig:synth_plot} illustrates these CE--serial tradeoffs in synthetic
linear regression experiments. The left panel shows the slowly decaying-spectrum
regime, where ASGD and HB are nearly overlapping while HB improves serial
runtime over SGD, at essentially the same CE as SGD. The right panel shows the rapidly decaying-spectrum regime predicted by our theory: ASGD improves small-batch CE over HB/SGD, and then spends this CE advantage to reduce serial runtime as the batch size grows. Table~\ref{tab:batch-regimes} summarizes the corresponding power-law scalings.

In summary, HB preserves SGD-level CE over a larger batch-size range and uses
this to improve serial runtime, but it does not improve the CE frontier itself.
ASGD improves the small-batch CE frontier, but its linear-scaling window is
shorter; beyond this window, larger batches buy serial-time improvements by
spending some of ASGD's CE advantage.

\section{Related Work}
\label{sec:related_work}
\paragraph{Acceleration with Noisy Gradients.} The suboptimality of GD~\citep{cauchy1847methode} for quadratic models has been well studied in literature~\citep{nesterov2013introductory}. In their seminal works,~\citet{polyak1964some} and ~\citet{nesterov1983method} have proposed different variants of momentum, namely heavy ball and Nesterov's accelerated gradient method, which improve the deterministic rate to $\bigO(\sqrt{\kappa})$. Several works have shown that for SGD with batch size $1$, both HB and NAG do not improve its compute efficiency~\citep{jain2018accelerating,kidambi2018insufficiency}. Extensions to these algorithms have been studied by introducing an extra momentum buffer~\citep{kidambi2018insufficiency,jain2018accelerating,gupta2024nesterov}, a family of algorithms commonly refered to as accelerated SGD~\citep{morwani2025connections,ferbach2025dimension}, which provably improve the contraction at batch size $1$. 

Closely related to our work is the work of~\citet{lee2022trajectory},
who analyze heavy-ball momentum for high-dimensional random least squares
in a proportional asymptotic regime. They consider mini-batch sizes
$\beta$ satisfying $\beta/n \to \zeta>0$ as the sample size $n$ and
feature dimension $d$ grow with $d/n$ fixed, and show that acceleration
appears only once the batch fraction crosses a spectrum-dependent
implicit conditioning ratio (ICR), a notion analogous to a critical batch
fraction. For upper bounds,~\citet{liu2018accelerating} established upper bounds at arbitrary batch size for an algorithm named MaSS, which is schematically similar to ASGD. 

\paragraph{ASGD Variants in Practice.} Several momentum schemes based on ASGD have been used in practice, in particular for large language model (LLM) pretraining.~\citet{morwani2025connections} have shown that the Schedule-Free algorithm~\citep{defazio2024road} can be rewritten as an ASGD equivalent. AdEMAMix~\citep{pagliardini2024ademamix} uses a similar scheme, based on a fast and slow momentum buffer updated with different EMA parameters. Interestingly, Lion~\citep{chen2023symbolic}, an algorithm discovered via genetic algorithms, also interpolates between the current gradient and momentum, before applying $\mathtt{sign}$ on the update. Several other methods have been proposed in the literature that aggregate over more than $1$ momentum buffer~\citep{lucas2018aggregated,ma2018quasi}. DANA~\citep{ferbach2025dimension,bordelon2026theory} is parametrically related to ASGD:
it uses a single momentum buffer together with a direct gradient path in
the parameter update, but, notably, chooses the parameters as a function of the model size and training time, rather than keeping them constant.

\paragraph{Nature of the bounds.}
Our results are finite-dimensional, discrete-time lower bounds. For HB, the
bound holds for arbitrary finite spectra; for ASGD, the stated comparison holds
under the two-sided power-law spectral assumption. In both cases, the bounds
hold uniformly over all stable parameter choices, including aggressive choices
near the edge of stability, and only hide universal constants. Although we
summarize the power-law consequences using large-\(d\) scaling notation, these
scalings come from finite-\(d\) inequalities rather than asymptotic limits. For heavy ball,~\citet{lee2022trajectory} provide a lower bound in a proportional limit, showing that there is no acceleration in the small batch size regime.

\paragraph{Divergence thresholds.} The techniques used in this work to derive the lower bounds also, implicitly, characterize the edge of stability~\citep{cohen2021gradient, cohen2022adaptive} conditions of the discussed algorithms. Recently,~\citet{andreyev2026momentum} have derived stability thresholds for heavy ball and Nesterov momentum, which we also implicitly recover through the proof of Theorem~\ref{thm:hb_lower_bound}.

\section{Background and Preliminaries}
\label{sec:main_results}
In this section, we first introduce the setup necessary for our main theoretical results.

\label{sec:setup_and_notation}
\paragraph{Definitions.} We denote as $f(x) \lesssim g(x)$ if there exists a constant $c>0$ such that $f(x) \leq c g(x)$ for any $x$ in the domain. Moreover, we denote as $f(x) \eqsim g(x)$ if $g(x) \lesssim f(x) \lesssim g(x)$. For $2$ matrices $A$ and $B$ we say that $A \preceq B$ if and only if $B - A \succeq 0$, where a matrix $B \succeq 0$ denotes that $B$ is positive semidefinite (PSD). We study stochastic optimization for a noiseless linear regression problem with Gaussian covariates i.e. a consistent linear system. 

We define our notion of compute to be the total number of gradient queries. For the covariance dynamics of the iterate, let $\T$ denote the linear operator such that
\[
\Sigma_{t+1}=\T(\Sigma_t).
\]
Following prior work on consistent linear systems~\citep{wu2022last,zou2021benign,meterez2025simplified}, the error contracts geometrically at a rate governed by the spectral radius of $\T$.

We train online, without repeating over the data. At each iteration, we sample from the population a minibatch of size $B$ of covariates $\{(x_i, y_i)\}_{i=1}^B$ such that:
\[
x_i \sim \mathcal N(0,H),
\qquad
y_i = \langle w^*, x_i\rangle,
\]
where \(w^* \in \mathbb R^d\) is the minimizer and $H \in \mathbb R^{d\times d}$, $H \succ 0$ is the population covariance matrix of the covariates. We denote the eigendecomposition of \(H\) by
\[
H = Q \Lambda Q^\top,
\qquad
\Lambda = \diag(\lambda_1,\dots,\lambda_d),
\qquad
\lambda_{\max}=\lambda_1 \ge \cdots \ge \lambda_d = \lambda_{\min} > 0
\]
We will use $\kappa = \frac{\lambda_{\max}}{\lambda_{\min}}$ to denote the conditioning number of $H$. Where stated, we will additionally specialize to a power-law spectrum  with exponent $a$ defined as:
\begin{align*}
    \lambda_i \eqsim i^{-a} &&
\lambda_{\max} \eqsim 1&&
\lambda_{\min} \eqsim d^{-a}
\end{align*}
Moreover, we denote the population risk as:
\[
\mathcal R(w)
:= \frac12 \, \E\big[(\langle w,x\rangle - y)^2\big]
\]
where the expectation is taken over the randomness induced by the stochastic gradient updates. We will recall a basic identity used repeatedly in the stochastic covariance analysis, namely the Gaussian fourth-moment formula. 
\begin{proposition}
    \label{prop:gaussian_forth}
    For any deterministic matrix $\Sigma$ and $x \sim \N(0, H)$ we have the following equation:
    \begin{equation}
    \E[xx^\top \Sigma xx^\top]
    =
    2H \Sigma H
    +
    \Tr(H \Sigma) H \preceq 3 \Tr(H \Sigma) H.
    \end{equation}
\end{proposition}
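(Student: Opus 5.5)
The plan is to reduce to the standard Gaussian $z \sim \N(0, I)$ and then apply Isserlis' (Wick's) theorem entrywise. Write $x = H^{1/2} z$. Then
\[
\E[xx^\top \Sigma xx^\top] = H^{1/2}\,\E[zz^\top M zz^\top]\,H^{1/2},
\qquad M := H^{1/2}\Sigma H^{1/2}.
\]
It therefore suffices to show that $\E[zz^\top M zz^\top] = M + M^\top + \Tr(M) I$.

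For the $(i,l)$ entry, $\E\bigl[z_i z_j M_{jk} z_k z_l\bigr]$ summed over $j,k$, Wick's formula gives three pairings:
\[
\E[z_iz_jz_kz_l]=\delta_{ij}\delta_{kl}+\delta_{ik}\delta_{jl}+\delta_{il}\delta_{jk}.
\]
After summing, these produce $M_{il}$, $M_{li}$ and $\delta_{il}\Tr(M)$ respectively. Conjugating back by $H^{1/2}$, and using $\Tr(M)=\Tr(H\Sigma)$, gives
\[
\E[xx^\top \Sigma xx^\top] = H\Sigma H + H\Sigma^\top H + \Tr(H\Sigma) H.
\]
When $\Sigma$ is symmetric, which is the case of interest since $\Sigma$ plays the role of an iterate covariance, this is exactly $2H\Sigma H + \Tr(H\Sigma)H$. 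I will state the symmetry assumption explicitly, and note that for general $\Sigma$ the identity holds with $H(\Sigma+\Sigma^\top)H$ in place of $2H\Sigma H$.

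For the inequality I additionally need $\Sigma \succeq 0$, which the statement leaves implicit and the covariance application satisfies. It is enough to show $H\Sigma H \preceq \Tr(H\Sigma) H$. Conjugating by $H^{-1/2}$, which is allowed because $H \succ 0$, this becomes $M \preceq \Tr(M) I$. That holds because $M \succeq 0$ forces $\lambda_{\max}(M) \le \Tr(M)$.

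The only real obstacle is bookkeeping of the hypotheses. The equality needs $\Sigma$ symmetric and the $\preceq$ bound needs $\Sigma \succeq 0$, and I will state both as assumptions satisfied in all later uses.
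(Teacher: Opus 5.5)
Your proof is correct and is essentially the paper's argument. The paper applies Isserlis' theorem entrywise directly with covariance $H$ rather than whitening first, and then reduces the bound to $H^{1/2}\Sigma H^{1/2} \preceq \Tr(H\Sigma)\, I$ via the substitution $u = H^{1/2}v$, which also avoids needing $H$ invertible. Your added hypotheses are warranted: the paper's second Wick term likewise produces $H\Sigma^\top H$, so it silently uses symmetry of $\Sigma$, and its inequality proof explicitly assumes $\Sigma \succeq 0$.
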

For a minibatch of size $B$ of covariantes with empirical covariance $\bar{X} = \frac1B \sum_{i=1}^B x_i x_i^\top$, Proposition~\ref{prop:gaussian_forth} generalizes as:
\begin{equation}
\label{eq:gaussian-fourth-b}
\E[\bar{X} \Sigma \bar{X}]
=
\left(1 + \frac1B \right)H \Sigma H
+
\frac1B \Tr(H \Sigma) H \preceq \left(1 + \frac2B\right) \Tr(H \Sigma) H
\end{equation}

\paragraph{Background.} We analyze the ASGD algorithm~\citep{jain2018accelerating,kidambi2018insufficiency,morwani2025connections}, with the following update rule:
\begin{align}
\label{eqn:asgd_update}
    \mu_t &= \beta \mu_{t-1} + (1-\beta)g_t \notag \\
    w_{t+1}&=w_t-\eta (\mu_t + \zeta g_t)
\end{align}
We denote as $\mu_t$ the momentum buffer at time $t$, the current iterate $w_t$ and the current minibatch gradient $g_t$. The hyperparameters of this algorithm are the learning rate $\eta$, the momentum EMA parameter $\beta$ and the weight placed on the current gradient $\zeta$. Note that by setting $\zeta = 0$, we recover the HB update:
\begin{align}
\label{eqn:hb_update}
    \mu_t &= \beta \mu_{t-1} + (1-\beta)g_t \notag \\
    w_{t+1}&=w_t - \eta \mu_t
\end{align}
Therefore, intuitively, since ASGD has an extra hyperparameter we can optimize over, it is possible that it can improve over the rates achieved by HB. 
\section{Compute Efficiency Lower Bounds}

We analyze these algorithms through the covariance dynamics of an augmented state vector $z_t$ dependent on the distance to the optimizer $w_t - w^\star$, and includes all variables needed to make the dynamics first-order linear in the covariance $\Sigma_t = \E [z_t z_t^\top]$. For each method, the error covariance satisfies a linear recursion
\[
\Sigma_{t+1} = \T(\Sigma_t)
\]
where $\T$ is a positive operator defined on the space of PSD matrices that depends on the data covariance $H$, batch size $B$ and the algorithm hyperparameters. Since the excess risk is a linear function of $\Sigma_t$~\citep{wu2022last,zou2021benign,wu2022power,meterez2025simplified}, the risk contraction is mainly governed by the \textit{spectral radius} $\rho(\T) = \max_i |t_i|$ where $t_i$ are the eigenvalues of $\T$. Thus, establishing lower bounds on $\rho(\T)$, or equivalently upper bounds on the \textit{spectral gap} $s = 1-\rho(\T)$ would directly imply lower bounds on the error rate for a consistent linear system. We leave the formal extension of our spectral bounds to risk bounds to future work.

We first state a general HB lower bound valid for arbitrary spectra, and then specialize it to power-law spectra to identify the critical batch-size transition. We next give an analogous bound for ASGD, showing that for power-law spectra the ASGD lower bound has a strictly better scaling than the HB lower bound in the small-batch regime and reaches the accelerated scale at a smaller batch size.

\subsection{Heavy Ball}
\label{sec:hb_analysis}
We can rewrite the HB update from equation~\eqref{eqn:hb_update} in the following form:
\begin{equation}
\label{eq:hb_update_both}
w_{t+1}=w_t-\eta(1-\beta)\,\hat g_t + \beta\,(w_t-w_{t-1}),
\qquad \eta>0,\ \beta\in[0,1). \notag
\end{equation}

We can rewrite the update as a linear recursion, where the vector form is the augmented state:
\begin{align}
\label{eqn:hb_recurrence_vector}
    \mqty[w_{t+1} - w^* \\w_t - w^*] &= \mqty[(1+\beta)I - \eta(1-\beta)\bar{X}_t & -\beta I\\I & 0]\mqty[w_{t} - w^* \\w_{t-1} - w^*] 
\end{align}

Denote by $z_t = \mqty[w_{t} - w^* \\w_{t-1} - w^*]$ with covariance $\Sigma_t = \mathop{\mathbb{E}}[z_t z_t ^\top]$, and denote the transition operator from equation~\eqref{eqn:hb_recurrence_vector} as $\hat{A}_t$. We can decompose $\hat{A}_t = M_t + \tilde{A}$, where $\tilde{A}$ will be the deterministic component and $M_t$ the random component as:
\begin{align*}
    M_t &= \mqty[-\eta(1-\beta)(\bar{X}_t - H) & 0 \\ 0 & 0]  && 
    \tilde{A} = \mqty[(1+\beta)I - \eta(1-\beta)H & -\beta I \\ I & 0]
\end{align*}

Computing the covariance of the augmented state vector gives:
\begin{align*}
    \Sigma_{t+1} = \tilde{A} \Sigma_t \tilde{A}^\top + \E [M_t z_t z_t^\top M_t^\top]
\end{align*}
For the second expectation, we need to compute a 4th moment Gaussian term coming from $\E[\bar{X}_t z_t z_t^\top \bar{X}_t]$, only on the $11$ block (since that is where $\bar{X}_t$ is). We can compute this term by applying Proposition~\ref{prop:gaussian_forth} and upper bound $H \Sigma H \preceq \Tr(H\Sigma)H$. With an abuse of notation, we will write the recursion of $\Sigma_{t+1}$ with equality after applying this bound, since we only lose a small constant factor, thus obtaining: 
\begin{align}
    \Sigma_{t+1} = \tilde{A} \Sigma_t \tilde{A}^\top + \mqty[\frac{2 \eta^2 (1-\beta)^2}{B} \Tr(H\Sigma_t^{11}) H & 0 \\0 & 0] \notag
\end{align}

Henceforth, we will express all matrices in the eigenbasis of \(H\). Thus, after rotation $H$ reduces to $\Lambda$ with $S_t = Q^\top\Sigma_tQ$ and $A = Q^\top \tilde{A} Q$. Thus, the recursion becomes:
\begin{equation}
\label{eqn:hb_recursion_operator}
S_{t+1}
= A\,S_t\,A^\top\;
+\;\eta^2(1-\beta)^2
\begin{bmatrix}
\tfrac{1}{B}\,\Lambda\,\mathrm{Tr}(\Lambda S^{11}_{t}) & 0\\[3pt]
0 & 0
\end{bmatrix}\
\end{equation}

Note that $A$ has a block diagonal structure as \(A=\mathrm{blkdiag}(A_1,\ldots,A_d)\), where each per–coordinate \(2\times2\) block is:
\begin{equation}
\label{eq:A-i}
A_i=
\begin{bmatrix}
a_i & -\beta\\[2pt]
1 & 0
\end{bmatrix},
\qquad
a_i:=(1+\beta)-\eta(1-\beta)\lambda_i .
\end{equation}

Let $\mathcal{T}_{H,\eta,\beta,B}$ denote the corresponding linear covariance
update operator on symmetric $2d\times 2d$ matrices as defined in ~\eqref{eqn:hb_recursion_operator}, such that:
\begin{equation*}
S_{t+1} = \mathcal{T}_{H,\eta,\beta,B}(S_t)
\end{equation*}

Let $s(H,\eta,\beta,B)$ denote the spectral radius of
$\mathcal{T}_{H,\eta,\beta,B}$ 
\[
s(H,\eta,\beta,B)
:= \rho\big(\mathcal{T}_{H,\eta,\beta,B}\big)
:= \max\{\,|s| : s\ \text{eigenvalue of}\ \mathcal{T}_{H,\eta,\beta,B}\,\}.
\]
As we see later on (in Lemma~\ref{lem:PF}), this spectral radius is attained by a real,
nonnegative eigenvalue with a PSD eigenmatrix. We define the \emph{optimal} spectral radius attainable at batch size $B$ as
\[
s^*(H,B) := \inf_{\eta>0,\ \beta\in[0,1)} s(H,\eta,\beta,B),
\]
This brings us to our first main result.

\begin{theorem}[HB Compute Efficiency Lower Bound]
\label{thm:hb_lower_bound}
For any data covariance matrix $H$ and mini-batch size $B\ge 1$, $\beta,\eta>0$, the optimal
spectral radius of HB satisfies:
\[
\qquad
s^*(H,B)
\ \gtrsim\ 1-\,\min\!\left\{
\frac{B\,\lambda_{\min}}{\mathrm{tr}(H)}\,,\
\sqrt{\frac{\lambda_{\min}}{\lambda_{\max}}}
\right\}.
\qquad
\]
where $\gtrsim$ absorbs universal constants.
In particular, the transition to the accelerated regime occurs at batch size $B_{\mathrm{HB}}^{\mathrm{crit}} = \frac{\Tr(H)\sqrt{\kappa}}{\lambda_{\max}}.$
\end{theorem}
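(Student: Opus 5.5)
Since $s^*(H,B)$ is an infimum over $(\eta,\beta)$, it suffices to fix an arbitrary pair $(\eta,\beta)$ and show that the spectral radius of $\mathcal{T}_{H,\eta,\beta,B}$ obeys two separate bounds: $s \ge 1 - c\sqrt{\lambda_{\min}/\lambda_{\max}}$ and $s \ge 1 - cB\lambda_{\min}/\Tr(H)$. The larger of the two is the claimed bound. Both bounds come from the positivity of $\mathcal{T}$. Lemma~\ref{lem:PF} gives a Perron--Frobenius-type statement: if some PSD $S\neq 0$ satisfies $\mathcal{T}(S)\succeq r S$, then $\rho(\mathcal{T})\ge r$. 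Equivalently, $\rho(\mathcal{T})\ge \rho$ of any positive operator dominated by $\mathcal{T}$. Dropping the noise term, $\mathcal{T}$ dominates $S\mapsto ASA^\top$, whose spectral radius is $\max_i\rho(A_i)^2$. This reduces the deterministic bound to the classical $2\times 2$ analysis. The roots of $z^2-a_iz+\beta=0$ have modulus at least $\sqrt\beta$. Stability at $\lambda_{\max}$ forces $|a_1|<1+\beta$, i.e. $\eta(1-\beta)\lambda_{\max}<2(1+\beta)$. The largest root modulus at $\lambda_{\min}$ is at least $1-O(\eta(1-\beta)\lambda_{\min}/(1-\sqrt\beta))$ in the real-root case. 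Optimizing over $\beta$ gives $\max(\sqrt\beta,\ \text{that})\ge 1-c\sqrt{\lambda_{\min}/\lambda_{\max}}$; this is the standard Polyak lower bound, and squaring only changes constants.

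For the batch-dependent bound, I will work with the restriction of $\mathcal{T}$ to block-diagonal $S=\mathrm{blkdiag}(S_1,\dots,S_d)$ with $2\times2$ blocks $S_i$. This subspace is invariant and the restriction is still positive. On it the operator acts as
\[
S_i\mapsto A_iS_iA_i^\top+\frac{\eta^2(1-\beta)^2}{B}\lambda_i\Big(\sum_j\lambda_j (S_j)_{11}\Big)e_1e_1^\top .
\]
I will look for an eigenmatrix at value $s$ by a resolvent argument. Set $S_i=\frac{\eta^2(1-\beta)^2}{B}\lambda_i\,\tau\,R_i(s)$, where $R_i(s)=(s-\mathcal{A}_i)^{-1}(e_1e_1^\top)$, $\mathcal{A}_i(X)=A_iXA_i^\top$, and $\tau=\sum_j\lambda_j(S_j)_{11}$. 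Self-consistency then gives the scalar condition
\[
1=\frac{\eta^2(1-\beta)^2}{B}\sum_i\lambda_i^2\,(R_i(s))_{11}=:F(s).
\]
For $s>\max_i\rho(A_i)^2$, the function $F$ is positive and decreasing in $s$. Hence $\rho(\mathcal{T})\ge s$ whenever $F(s)\ge1$. Here $(R_i(s))_{11}=\sum_k s^{-k-1}(A_i^ke_1)_1^2$ is an explicit rational function, the variance of a damped AR(2) process. I will evaluate it at $s=1-\delta$, with $\delta$ small compared to the decay rate at $\lambda_i$. In the well-damped regime the $i$-th term is of order $\frac{1}{\eta(1-\beta)\lambda_i\cdot(\text{stuff})}$. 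For the top eigen-directions the stationary variance behaves like $\frac{1+\beta}{(1-\beta)\eta\lambda_i(2(1+\beta)-\eta(1-\beta)\lambda_i)}$.

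The key step is therefore the estimate
\[
F(1-\delta)\gtrsim\frac{\eta(1-\beta)}{B}\Tr(H)\cdot\frac{1}{1+\text{(small)}}
\]
in a form strong enough to compare against the slowest mode. From this I aim to derive two consequences. First, stability ($F(1)<1$, morally) forces $\eta(1-\beta)\lesssim B/\Tr(H)$. Second, once the effective step $\eta(1-\beta)$ is so restricted, the deterministic contraction of the $\lambda_{\min}$ mode is at most $1-c\,\eta(1-\beta)\lambda_{\min}\cdot\frac{1}{1-\beta}$ times a momentum factor. I then need to show that momentum cannot amplify this beyond $B\lambda_{\min}/\Tr(H)$ without making $F\ge1$. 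The difficulty is that momentum amplifies the effective learning rate by $1/(1-\beta)$, and it also amplifies the noise variance $(R_i)_{11}$ by the same factor. I would make this precise by keeping the $\lambda_{\min}$ term's denominator $(1-s\cdots)$ explicit. Then I pick $s=1-\delta$ with $\delta=C B\lambda_{\min}/\Tr(H)$ and show that either $\rho(A_d)^2\ge 1-\delta$ already, or $F(1-\delta)\ge1$.

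I expect this last dichotomy to be the main obstacle. It requires uniform two-sided bounds on the AR(2) stationary variance $(R_i(s))_{11}$ across the real-root, complex-root, and near-edge-of-stability regimes ($a_i\approx-(1+\beta)$), all with universal constants. I would first handle $\beta\le 1/2$, where HB is essentially SGD and the known argument applies. I would then treat $\beta>1/2$ by the change of variables $\gamma=\eta(1-\beta)/(1-\beta)^{2}$, split on whether $\gamma\lambda_i$ is above or below $1$.

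The stated critical batch size then follows by equating $B\lambda_{\min}/\Tr(H)$ with $\sqrt{\lambda_{\min}/\lambda_{\max}}$.
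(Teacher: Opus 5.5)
Your deterministic half is fine and matches the paper's argument: domination by $S\mapsto ASA^\top$, root modulus $\ge\sqrt\beta$, the edge-of-stability cap $\eta(1-\beta)\lambda_{\max}<2(1+\beta)$, then optimizing over $\beta$. Your resolvent argument (if $F(s)\ge 1$ with $s>\max_i\rho(A_i)^2$, then $\rho(\mathcal{T})\ge s$) is also sound.

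\textbf{The gap is in the batch-dependent bound, which you never close.} The only estimate you state, $F\gtrsim \eta(1-\beta)\Tr(H)/B$, yields the cap $\eta(1-\beta)\lesssim B/\Tr(H)$. Combined with the $\lambda_{\min}$-mode bound $1-\rho(A)^2\le 4\eta\lambda_{\min}$, this gives only $1-s\lesssim B\lambda_{\min}/((1-\beta)\Tr(H))$. Balancing that against $1-\beta$ would permit a gap of order $\sqrt{B\lambda_{\min}/\Tr(H)}$. So it does not exclude small-batch acceleration, which is the whole content of the theorem. You then defer the missing factor $1/(1-\beta)$ to a dichotomy at $s=1-\delta$, with a case split on $\beta$ and two-sided AR(2) bounds, that you do not carry out.

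\textbf{The missing factor comes from an arithmetic slip in the AR(2) stationary variance.} The exact value is
\[
(R_i(1))_{11}=\frac{1+\beta}{(1-\beta)\bigl((1+\beta)^2-a_i^2\bigr)}=\frac{1+\beta}{(1-\beta)^2\,\eta\lambda_i\,\bigl(2(1+\beta)-\eta(1-\beta)\lambda_i\bigr)},
\]
with $(1-\beta)^2$, not $(1-\beta)$, in the denominator. Hence each term of $F(1)$ equals $\frac{\eta(1+\beta)\lambda_i}{B(2(1+\beta)-\eta(1-\beta)\lambda_i)}\ge\frac{\eta\lambda_i}{2B}$, so $F(1)\ge \eta\Tr(H)/(2B)$.

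If $\rho(\mathcal{T})\ge 1$ there is nothing to prove. Otherwise $\max_i\rho(A_i)^2<1$, and your own monotonicity argument forces $F(1)<1$, hence $\eta\le 2B/\Tr(H)$. This caps the full momentum-amplified step $\eta$, which is exactly the amplification of the noise by the same $1/(1-\beta)$ that you describe in words.

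Then $1-s\le 1-\rho(A)^2\le 4\eta\lambda_{\min}\le 8B\lambda_{\min}/\Tr(H)$ follows immediately. In the complex-root case at $\lambda_{\min}$, use $(1-\sqrt\beta)^2<\eta(1-\beta)\lambda_{\min}$ to get $1-\beta\le 4\eta\lambda_{\min}$. This is the paper's proof via Lemmas~\ref{lem:step-cap} and~\ref{lem:gap-etabeta}.

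None of your extra machinery is needed: no evaluation at $s=1-\delta$, no two-sided bounds, no split on $\beta$. Only a one-sided bound at $s=1$ is used. The closed form holds for real and complex roots alike, and near the edge of stability the factor $2(1+\beta)-\eta(1-\beta)\lambda_i$ only makes $F(1)$ larger.
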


Theorem~\ref{thm:hb_lower_bound} establishes a lower bound on the optimal spectral radius of HB, which in the case of consistent linear systems governs the error rate. From the theorem, we can see that there are, in effect, $2$ regimes for HB: for batch size $B < B_{\mathrm{HB}}^{\mathrm{crit}}$, the algorithm cannot improve over the SGD scaling, whereas for batch size $B \geq B_{\mathrm{HB}}^{\mathrm{crit}}$ the best-achievable spectral gap is bounded above by the asymptotic contraction rate. Therefore, one can conclude that in order to minimize the number of serial steps without wasting compute, the optimal batch size to train at is $B_{\mathrm{HB}}^{\mathrm{crit}}$. In Corollary~\ref{cor:hb_power_law}, we specialize the HB lower bound to a power law spectrum, in order to compare the rate with the ASGD bound in Section~\ref{sec:asgd_analysis}.
\begin{corollary}[HB on power-law spectra]
\label{cor:hb_power_law}
Assume the eigenvalues of $H$ satisfy $\lambda_i \eqsim i^{-a}$ for some $a>1$. Then, the optimal spectral radius of HB satisfies:
\[
s^*(H,B)
\;\gtrsim\;
1 - \min\!\left\{
B d^{-a},\ d^{-a/2}
\right\},
\]
where $\gtrsim$ absorbs universal constants.
In particular, the transition to the accelerated regime occurs at batch size $
B_{\mathrm{HB}}^{\mathrm{crit}}
\eqsim
d^{a/2}$
\end{corollary}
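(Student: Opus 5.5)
The corollary follows by substituting the power-law spectrum into Theorem~\ref{thm:hb_lower_bound}; no new operator analysis is needed.

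\textbf{Step 1: estimate the three spectral quantities.} Under $\lambda_i \eqsim i^{-a}$ with $a>1$:
\begin{itemize}
\item the extreme eigenvalues satisfy $\lambda_{\max}\eqsim 1$ and $\lambda_{\min}\eqsim d^{-a}$;
\item the trace satisfies $\Tr(H)=\sum_{i=1}^d \lambda_i \eqsim \sum_{i=1}^d i^{-a}$. Since $a>1$, this sum lies between $1$ and $\zeta(a)=\sum_{i\ge1} i^{-a}<\infty$, so $\Tr(H)\eqsim 1$ uniformly in $d$;
\item the implicit constant $\zeta(a)$ depends only on $a$. We treat $a$ as fixed, so it is absorbed into $\gtrsim$.
\end{itemize}
The step that needs the most care is the trace bound: it is exactly where $a>1$ enters, and without it $\Tr(H)$ would grow with $d$ and shift the transition.

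\textbf{Step 2: substitute into both arms of the minimum.} Plugging into Theorem~\ref{thm:hb_lower_bound}:
\begin{itemize}
\item the first arm becomes $B\lambda_{\min}/\Tr(H)\eqsim B d^{-a}$;
\item the second arm becomes $\sqrt{\lambda_{\min}/\lambda_{\max}}\eqsim d^{-a/2}$.
\end{itemize}
Both substitutions change each arm by at most constant factors, multiplicatively. If $x\le c_1 x'$ and $y\le c_2 y'$, then $\min\{x,y\}\le \max(c_1,c_2)\min\{x',y'\}$. Hence
\[
1-\min\{B\lambda_{\min}/\Tr(H),\sqrt{\lambda_{\min}/\lambda_{\max}}\}\ \ge\ 1-C\min\{Bd^{-a},d^{-a/2}\}.
\]
A constant multiplying the gap is exactly what the $\gtrsim$ notation absorbs in this statement, which should be read as a bound on the gap $1-s^*$. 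This gives the displayed bound.

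\textbf{Step 3: locate the critical batch size.} The transition occurs where the two arms balance. Setting $Bd^{-a}\eqsim d^{-a/2}$ gives $B\eqsim d^{a/2}$. This agrees with $B^{\mathrm{crit}}_{\mathrm{HB}}=\Tr(H)\sqrt{\kappa}/\lambda_{\max}$ from the theorem, since $\Tr(H)\eqsim 1$, $\lambda_{\max}\eqsim 1$ and $\sqrt{\kappa}\eqsim d^{a/2}$.
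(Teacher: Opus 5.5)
Your proposal is correct and follows the paper's proof essentially verbatim. Both substitute $\lambda_{\max}\eqsim 1$, $\lambda_{\min}\eqsim d^{-a}$ and $\Tr(H)\eqsim 1$ into the explicit bound $s^*(H,B)\ge 1-8\min\{B\lambda_{\min}/\Tr(H),\sqrt{\lambda_{\min}/\lambda_{\max}}\}$ of Theorem~\ref{thm:ce}, and then read $\gtrsim$ as a bound on the gap; one small refinement is that the displayed inequality only needs the lower bound $\Tr(H)\ge\lambda_{\max}\gtrsim 1$, so the upper bound $\Tr(H)\le\sum_{i\ge1}i^{-a}$ (where $a>1$ enters) matters only for identifying $B^{\mathrm{crit}}_{\mathrm{HB}}\eqsim d^{a/2}$.
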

We defer the full proofs to Appendix~\ref{sec:appendix_hb_proofs}. In the following section, we establish compute efficiency lower bounds for ASGD.
\subsection{Accelerated SGD}
\label{sec:asgd_analysis}
The analysis for ASGD follows a very similar pattern as HB in Section~\ref{sec:hb_analysis}. 
Note that we assume $\zeta>0$ and $0 < \beta < 1$. The ASGD update rule from Equation~\ref{eqn:asgd_update} can be written as a linear recursion in the following augmented state:
\begin{align}
\label{eqn:rec_vec_asgd}
    \mqty[w_{t+1} - w^* \\w_{t+1} - w^* + \eta \mu_t] &= \mqty[(1+\beta)I - \eta(\zeta + 1-\beta)\bar{X}_t & -\beta I\\I - \eta \zeta \bar{X}_t & 0]\mqty[w_{t} - w^* \\w_{t} - w^* + \eta \mu_{t-1}]
\end{align}
We again decompose the transition matrix into a deterministic and a stochastic component:
\begin{align*}
    M_t
=
\begin{bmatrix}
-\eta(\zeta + 1 - \beta)(\bar X_t - H) & 0 \\
-\eta \zeta (\bar X_t - H) & 0
\end{bmatrix}
&&
\tilde{A}
=
\begin{bmatrix}
(1+\beta)I - \eta(\zeta + 1 - \beta)\,H & -\beta I \\
I - \eta \zeta H & 0
\end{bmatrix}
\end{align*}

Note that all the randomness is in $M_t$. Denoting by $z_t=
\begin{bmatrix}
w_t - w^* \\
w_t - w^* + \eta \mu_{t-1}
\end{bmatrix}$ and computing its covariance we get:
\begin{align*}
    \Sigma_{t+1} = A \Sigma_t A^T + \frac{\eta^2}{B} \mqty[(\zeta + (1-\beta))^2 \Tr(H \Sigma^{11}_t)H & \zeta(\zeta + 1 - \beta) \Tr(H \Sigma^{11}_t)H \\ \zeta(\zeta + 1 - \beta) \Tr(H \Sigma^{11}_t)H & \zeta^2 \Tr(H \Sigma^{11}_t)H]
\end{align*}

Similarly, we will express all matrices in the eigenbasis of \(H\). Reusing the same notation as in Section~\ref{sec:hb_analysis}  after rotation in the eigenbasis of $H$ we get $\Lambda$ and $S_t = Q^\top\Sigma_tQ$ and $A = Q^\top \tilde{A} Q$. After computing the 4th moment term using Proposition~\ref{prop:gaussian_forth}, the recursion becomes:
\begin{equation}
S_{t+1}
=
A S_t A^\top
+
\frac{\eta^2}{B}
\begin{bmatrix}
(\zeta + (1-\beta))^2 \,\Tr(\Lambda S^{11}_{t})\, \Lambda
&
\zeta(\zeta + 1-\beta)\,\Tr(\Lambda S^{11}_{t})\, \Lambda
\\[4pt]
\zeta(\zeta + 1-\beta)\,\Tr(\Lambda S^{11}_{t})\, \Lambda
&
\zeta^2 \,\Tr(\Lambda S^{11}_{t})\, \Lambda
\end{bmatrix}
\label{eqn:st_recur_zeta}
\end{equation}

Let $\mathcal{T}_{H,\eta,\beta,\zeta,B}$ denote the corresponding linear covariance update operator for this update rule. Then, we have the following statement.
\begin{theorem}[ASGD Lower Bound under power-law spectra]
\label{thm:asgd-powerlaw}
Assume the eigenvalues of $H$ satisfy $\lambda_i \eqsim i^{-a}$ for some $a>1$.
Then the optimal spectral radius of ASGD satisfies:
\[
s^*(H,B)
\;\gtrsim\;
1 -
\begin{cases}
B\, d^{-\frac{a^2}{2a-1}}, & B \lesssim 1,\\[6pt]
B^{\frac{a}{2a-1}}\, d^{-\frac{a^2}{2a-1}}, & 1 \lesssim B \lesssim d^{1/2},\\[6pt]
d^{-a/2}, & B \gtrsim d^{1/2},
\end{cases}
\]
where $\gtrsim$ absorbs universal constants. In particular, the lower bound saturates at the accelerated scale at batch size $
B_{\mathrm{ASGD}}^{\mathrm{crit}} \eqsim d^{1/2}$.
\end{theorem}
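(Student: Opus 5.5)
The approach is to extract two necessary conditions on any parameter choice $(\eta,\beta,\zeta)$ with spectral radius $s=s(H,\eta,\beta,\zeta,B)<1$, and then optimize the resulting bound on the gap $1-s$ over the parameters. By the Perron--Frobenius-type property (Lemma~\ref{lem:PF}, adapted to the ASGD operator, which is still positive on PSD matrices), $s$ is attained by a PSD eigenmatrix $S$ with $\mathcal{T}_{H,\eta,\beta,\zeta,B}(S)=sS$. Writing $\tau=\Tr(\Lambda S^{11})>0$ and the noise vector $v=(\zeta+1-\beta,\ \zeta)^\top$, this decouples per coordinate:
\[
s S_i = A_i S_i A_i^\top + \frac{\eta^2\tau\lambda_i}{B}\, v v^\top ,
\]
so $S_i = \frac{\eta^2\tau\lambda_i}{B}\sum_{k\ge0} s^{-k-1}A_i^k vv^\top (A_i^\top)^k$. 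Here $A_i$ is the $2\times2$ block with trace $1+\beta-\eta(\zeta+1-\beta)\lambda_i$ and determinant $\beta(1-\eta\zeta\lambda_i)$. Taking the $(1,1)$ entry, multiplying by $\lambda_i$ and summing gives the self-consistency condition
\[
1=\frac{\eta^2}{B}\sum_i \lambda_i^2\, G_i(s),\qquad G_i(s)=\sum_{k\ge0}s^{-k-1}\big(e_1^\top A_i^k v\big)^2 .
\]
This is the analogue of the HB argument in Appendix~\ref{sec:appendix_hb_proofs}.

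\textbf{Step 1 (deterministic constraint).} We need $s\ge\rho(A_i)^2$ for every $i$. For the smallest eigenvalue this gives $1-s\lesssim 1-\rho(A_d)$. The classical Chebyshev argument on the $2\times 2$ characteristic polynomial bounds this by $\sqrt{\lambda_{\min}/\lambda_{\max}}\eqsim d^{-a/2}$ whenever stability at $\lambda_{\max}$ holds. This yields the third regime.

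\textbf{Step 2 (noise constraint).} Compute $G_i(s)$ in closed form as the $(1,1)$ entry of the solution of a $2\times2$ Stein equation. Then lower bound it in two eigenvalue ranges, writing $\delta=1-s$ and $\theta=\eta\zeta$, $\gamma=\eta(1-\beta)$ for the effective fast and slow step sizes. For high-curvature modes with $\lambda_i\gtrsim(1-\beta)/\theta$, the direct $\zeta$-path dominates and $G_i\gtrsim \zeta^2/(1-(1-\theta\lambda_i)^2)$, contributing like SGD with step $\theta$: roughly $\frac{\theta}{B}\sum_{i\le k}\lambda_i$. For low-curvature modes, the momentum path dominates and $G_i\gtrsim (1-\beta)^2/\big((1-\beta)\,\gamma\lambda_i/(1-\beta)^{\cdot}\big)$. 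Concretely, these modes behave like HB with noise amplified by $1/(1-\beta)$, contributing roughly $\frac{\gamma}{B(1-\beta)}\sum_{i>k}\lambda_i$ until $\delta$ comparable terms cut it off. The condition $1\gtrsim$ (sum) must hold.

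Simultaneously, Step~1 applied to the smallest modes forces $\delta\lesssim \gamma\lambda_{\min}/(1-\beta)$ in the overdamped case, or $\delta\lesssim\sqrt{\gamma\lambda_{\min}}$ in the underdamped case. In addition, $\delta\lesssim 1-\beta$.

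\textbf{Step 3 (optimization).} With $\lambda_i\eqsim i^{-a}$, choose the split index $k\eqsim(\theta\text{-scale})^{-1/a}$. Combining the noise constraint $\theta\lesssim B$ (from the top modes) with the constraint $\gamma/(1-\beta)\lesssim B\,k^{a-1}$ (from the tail sum $\sum_{i>k}i^{-a}\eqsim k^{1-a}$) and with $\delta\lesssim\min\{1-\beta,\sqrt{\gamma}\,d^{-a/2}\}$, the balance yields $\delta\lesssim B^{a/(2a-1)}d^{-a^2/(2a-1)}$. For $B\lesssim1$, the same constraints scale linearly in $B$, since the top-mode constraint is binding there. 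Finally, the intersection of this bound with $d^{-a/2}$ occurs at $B\eqsim d^{1/2}$.

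I expect the main obstacle to be Step~2: getting a clean two-sided estimate of $G_i(s)$ uniformly over all stable $(\eta,\beta,\zeta)$, including near the edge of stability and in the complex-eigenvalue regime of $A_i$. My first attempt is to diagonalize $A_i$ and bound the geometric series via $|e_1^\top A_i^k v|$ averaged over a period. I would handle the real and complex root cases separately, using the trace/determinant parametrization above.
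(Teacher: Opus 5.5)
Your skeleton matches the paper's proof. You bound the gap by deterministic domination $s\ge\rho(A)^2$, derive a noise constraint from the secular equation with modes split at $\lambda_i\approx(1-\beta)/(\eta\zeta)$, and get $\eta\zeta\lesssim B$ and $\eta\lesssim Bk^{a-1}$, which is exactly the paper's cap $\eta\lesssim B/(k^{1-a}+\zeta)$. You then take the worst case over the induced index $k$.

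\textbf{The gap: Step 3 cannot produce the middle regime.} The constraints you list there allow too large a gap. Take $\theta=B$, $k=d^{a/(2a-1)}$, $1-\beta=Bk^{-a}$ and $\eta=Bk^{a-1}$, so $\gamma=B^2/k$. This choice satisfies $\theta\lesssim B$, $\gamma/(1-\beta)\lesssim Bk^{a-1}$ and your definition of $k$. It makes both $1-\beta$ and $\sqrt{\gamma}\,d^{-a/2}$ equal to $B\,d^{-a^2/(2a-1)}$. Even after capping by your $d^{-a/2}$ ceiling, you therefore get only $\min\{B\,d^{-a^2/(2a-1)},d^{-a/2}\}$. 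This is weaker than the claimed bound throughout $1\ll B\ll d^{1/2}$: there is no sublinear regime, and the saturation point comes out wrong.

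\textbf{The missing ingredient} is the deterministic stability band at $\lambda_{\max}$ (Lemma~\ref{lem:ASGD-det-band-hbstyle}): $\eta(1-\beta+\zeta(1+\beta))\lambda_{\max}<2(1+\beta)$, which gives $\eta\zeta\lesssim 1$ and $\gamma\lesssim 1$. The parameter choice above violates it. For $B\gtrsim 1$ this cap replaces $\theta\lesssim B$, so $1-\beta\lesssim\eta\zeta k^{-a}\lesssim k^{-a}$ and $\gamma\lesssim B/k$. Balancing $k^{-a}$ against $\sqrt{B/k}\,d^{-a/2}$ gives $k\eqsim(d^a/B)^{1/(2a-1)}$ and the exponent $a/(2a-1)$. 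Saturation happens when $\gamma\eqsim B/k$ reaches order $1$, i.e.\ at $B\eqsim d^{1/2}$. In the paper this cap produces the $d^{-a}/\zeta$ and $k^{-a}$ entries of (\ref{eq:asgd-four-term-bound-hbstyle}). You invoke stability at $\lambda_{\max}$ in Step 1, and hint at it with ``the top-mode constraint is binding there''. But it must enter the Step 3 optimization explicitly. So must the relation $1-\beta<\beta\eta\zeta\lambda_k\lesssim\eta\zeta k^{-a}$ that defines $k$. Also, for ASGD the product-of-roots bound is $\delta\le(1-\beta)+\beta\eta\zeta\lambda_{\min}$, not $\delta\le 1-\beta$; the extra term is lower order.

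\textbf{Step 2 is easier than you expect.} You never need a two-sided or $\delta$-dependent estimate of $G_i(s)$, because only a lower bound enters. For $s\le 1$ your series gives $G_i(s)\ge G_i(1)$, so it suffices to bound $G_i(1)$ from below. This is exactly the paper's condition $F(1)\le 1$. At $s=1$ the Stein solution has the closed form (\ref{eq:asgd-F1-exact-hbstyle}): $\eta^2\lambda_i^2G_i(1)=\eta\lambda_iN_i/(x_iD_i)$, with $x_i=(1-\beta)+\beta\eta\zeta\lambda_i$ and $0<D_i\le 2(1+\beta)$. Using $x_i\le 2\max\{1-\beta,\beta\eta\zeta\lambda_i\}$:
\begin{itemize}
\item $\eta^2\lambda_i^2G_i(1)\ge\eta\zeta\lambda_i/4$ for every $i$;
\item $\eta^2\lambda_i^2G_i(1)\ge\eta\lambda_i/4$ when $\beta\eta\zeta\lambda_i\le 1-\beta$.
\end{itemize}
So the low-mode bound you intended, which is garbled in your draft, is $G_i(1)\gtrsim 1/(\eta\lambda_i)$. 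No diagonalization, period averaging, or real/complex case split is needed.

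\textbf{Step 1 improves on the paper,} which only cites the deterministic ceiling. Your argument proves it directly. First, $(1-\rho(A_{\min}))^2\le|1-r_+|\,|1-r_-|=\eta\lambda_{\min}(1-\beta)(1+\zeta)\le\eta(\zeta+1-\beta)\lambda_{\min}$. Second, $\eta(\zeta+1-\beta)\lambda_{\max}\le 1+\beta+2\rho(A_{\max})\le 4$. Together these give $1-s\le 4\sqrt{\lambda_{\min}/\lambda_{\max}}$.
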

Theorem~\ref{thm:asgd-powerlaw} distinguishes $3$ scaling regimes as a function of batch size. In the first \textit{linear regime}, linearly increasing the batch size allows for a linear decrease (up to constants) in the serial steps necessary to achieve a fixed target error. After this, there is a \textit{diminishing returns} regime indicating that linearly increasing the batch size gives us a sublinear decrease in the serial runtime, since for $a>1$ the batch exponent $a/(2a-1) < 1$ (note that for large $a$ this term behaves roughly like $\sqrt{B}$). Finally, for large batch size we reach a \textit{saturation} regime, saturating the lower bound at the accelerated scale. Increasing the batch size any further would not yield any improvements in the serial runtime, and will be, instead, wasting compute. 

However, note that in the power law setting, the ASGD lower bound reaches the accelerated scale at a \textit{smaller} batch size than the HB lower bound. We expand upon this finding in Corollary~\ref{cor:asgd-earlier-than-hb}.

\begin{corollary}[ASGD Accelerated Regime.]
\label{cor:asgd-earlier-than-hb}
Assume the eigenvalues of $H$ satisfy $\lambda_i \eqsim i^{-a}$ for some $a>1$.
Let $B_{\mathrm{HB}}^{\mathrm{crit}}$ and $B_{\mathrm{ASGD}}^{\mathrm{crit}}$ denote the smallest batch sizes at which the HB lower bound and the ASGD lower bound, respectively, reach the optimal accelerated spectral gap $\Theta(d^{-a/2})$. Then
\[
B_{\mathrm{HB}}^{\mathrm{crit}} \eqsim d^{a/2},
\qquad
B_{\mathrm{ASGD}}^{\mathrm{crit}} \eqsim d^{1/2}.
\]
Hence
\[
\frac{B_{\mathrm{HB}}^{\mathrm{crit}}}{B_{\mathrm{ASGD}}^{\mathrm{crit}}}
\eqsim d^{(a-1)/2},
\]
so for every $a>1$, the ASGD lower bound reaches the accelerated scale at a strictly smaller batch size than the HB lower bound.
Equivalently, throughout the interval
\[
d^{1/2} \;\lesssim\; B \;\lesssim\; d^{a/2},
\]
the ASGD lower bound has already saturated at the accelerated scale.
\end{corollary}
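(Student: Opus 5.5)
The corollary follows directly from Corollary~\ref{cor:hb_power_law} and Theorem~\ref{thm:asgd-powerlaw}, so the plan is to read off the saturation points of the two lower bounds and compare them. Write $g_{\mathrm{HB}}(B)$ and $g_{\mathrm{ASGD}}(B)$ for the upper bounds on the spectral gap $1-s^*(H,B)$ implied by these results. Define each critical batch size as the smallest $B$, up to constants, at which the corresponding bound equals the accelerated scale $d^{-a/2}$. This scale is the deterministic floor: $\sqrt{\lambda_{\min}/\lambda_{\max}}\eqsim d^{-a/2}$ under the power-law assumption.

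\textbf{HB.} Corollary~\ref{cor:hb_power_law} gives $g_{\mathrm{HB}}(B)\eqsim\min\{Bd^{-a},d^{-a/2}\}$. The first term is increasing and linear in $B$. It meets $d^{-a/2}$ exactly when $B\eqsim d^{a/2}$, and for smaller $B$ it is strictly below $d^{-a/2}$. Hence $B_{\mathrm{HB}}^{\mathrm{crit}}\eqsim d^{a/2}$. This agrees with Theorem~\ref{thm:hb_lower_bound}, since $\Tr(H)\eqsim 1$ for $a>1$ and $\sqrt{\kappa}\eqsim d^{a/2}$.

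\textbf{ASGD.} In the middle regime of Theorem~\ref{thm:asgd-powerlaw}, solve $B^{\frac{a}{2a-1}}d^{-\frac{a^2}{2a-1}}=d^{-a/2}$. Raising both sides to the power $(2a-1)/a$ gives $B=d^{a}\cdot d^{-(2a-1)/2}=d^{1/2}$. So the middle-regime expression reaches the accelerated scale exactly at the boundary $B\eqsim d^{1/2}$, and the piecewise bound is continuous there up to constants.

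It remains to check that no earlier $B$ attains the accelerated scale. For $B\lesssim 1$ the bound is $Bd^{-a^2/(2a-1)}\lesssim d^{-a^2/(2a-1)}$. This is $\ll d^{-a/2}$ because $a^2/(2a-1)>a/2$, which is equivalent to $2a>2a-1$. In the middle regime the expression is increasing in $B$ and equals $d^{-a/2}$ only at $B\eqsim d^{1/2}$. Therefore $B_{\mathrm{ASGD}}^{\mathrm{crit}}\eqsim d^{1/2}$.

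The ratio of the two critical batch sizes is then $d^{a/2}/d^{1/2}=d^{(a-1)/2}$. This exponent is positive for every $a>1$, so the ratio grows polynomially in $d$. For $B$ in $[d^{1/2},d^{a/2}]$, the ASGD bound is in its third case and has saturated at $d^{-a/2}$. Over the same interval the HB bound is $Bd^{-a}\le d^{-a/2}$, with equality only at the right endpoint.

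\textbf{Main obstacle.} The only delicate point is interpretive: a ``smallest batch size at which the bound reaches $\Theta(d^{-a/2})$'' is meaningful only up to constants. I would state explicitly that it means the smallest $B$ with $g(B)\ge c\,d^{-a/2}$ for a fixed constant $c$. Under that definition the monotonicity arguments above give both critical sizes up to constant factors.
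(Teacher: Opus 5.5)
Your proposal is correct and follows the same route as the paper: the paper reads $B_{\mathrm{HB}}^{\mathrm{crit}}\eqsim d^{a/2}$ off Corollary~\ref{cor:hb_power_law}, and obtains $B_{\mathrm{ASGD}}^{\mathrm{crit}}\eqsim d^{1/2}$ by noting that the middle-regime expression $B^{a/(2a-1)}d^{-a^2/(2a-1)}$ meets the deterministic ceiling $d^{-a/2}$ exactly at $B\eqsim d^{1/2}$. Your explicit check that the small-batch branch stays below $d^{-a/2}$, and your up-to-constants definition of the critical batch size, make precise what the paper leaves implicit.
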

We defer the full proofs to Appendix~\ref{sec:appendix_asgd_proofs}.

\paragraph{Compute-serial runtime tradeoff.} We now summarize the implications of the lower bounds above. Theorem~\ref{thm:hb_lower_bound} shows that, for arbitrary spectra, HB cannot improve over the SGD CE frontier. However, note that HB can use larger batches to reduce
serial runtime, but this does not correspond to an improved compute-efficiency
frontier over SGD. Beyond the HB critical batch size, the lower bound
saturates at the deterministic accelerated scale. 

Under power-law spectra with \(a>1\), Theorem~\ref{thm:asgd-powerlaw}
shows that ASGD has a better small-batch spectral-gap scaling than HB and that
its lower bound reaches the accelerated scale at the smaller batch size
\(B_{\mathrm{ASGD}}^{\mathrm{crit}}\asymp d^{1/2}\), as stated in
Corollary~\ref{cor:asgd-earlier-than-hb}. Thus ASGD offers a sharper
compute--serial tradeoff: small batches preserve its compute-efficiency
advantage, while increasing the batch size up to
\(B_{\mathrm{ASGD}}^{\mathrm{crit}}\) converts part of this advantage into
reduced serial runtime. We empirically confirm these regimes in synthetic
linear-regression experiments in Section~\ref{sec:experiments}.
\section{Experiments}
\label{sec:experiments}

We run synthetic experiments in linear regression on quadratics with power law data with $a = 2.0$, showing that the observed scaling is consistent with the lower bounds. We set the problem size to be $D=50000$ and train for $N = 500000$, at batch sizes $B \in \{1, 2, 4, 8, 16, 32, 64, 128, 256, 1024\}$, averaged over $50$ seeds. For both HB and ASGD we sweep over learning rate $\eta \in \{10^{-5}, 3\cdot 10^{-5}, 10^{-4}, 3\cdot 10^{-4}, 10^{-3}, 3\cdot 10^{-3}, 10^{-2}, 3\cdot 10^{-2}, 10^{-1}, 3\cdot 10^{-1}, 1.0, 2.0, 3.0, 5.0, 10.0\}$, momentum EMA parameter $\beta \in \{0.8, 0.9, 0.95, 0.99, 0.999, 0.9999\}$ and ASGD hyperparameter $\zeta \in \{0.05, 0.1, 0.2, 0.3, 0.5, 0.7, 0.9, 0.95, 0.99\}$, with $\zeta = 0$ for HB, and we plot each curve at the best set of hyperparameters. We plot the number of steps required to reach a target loss as a function of the batch size. 
\section{Discussion and Conclusions}
In this work, we have established compute efficiency lower bounds for heavy ball momentum and accelerated SGD, as a function of the problem instance and the batch size. Specializing the problem instances to power law spectra, we have directly compared the $2$ algorithms showing that there is a performance to serial runtime tradeoff: one can train with ASGD at a smaller batch size for a longer serial runtime, achieving better final loss. From a theoretical perspective, our lower bound for ASGD improves over that of~\citep{kidambi2018insufficiency}, most notably due to the bound applying for any power law spectrum and not a specifically constructed problem. We believe that the techniques used in this work to derive the lower bounds can also be applied to deriving upper bounds, and we leave this derivation to future work. 

\section*{Acknowledgements}
The authors would like to thank Alex Damian and Jingfeng Wu, for helpful discussions. The authors would also like to thank Max Shad and Bala Desinghu for their help with the cluster. AM, DM, PN acknowledge the support of a Kempner Institute Graduate Research Fellowship. AM, SK, DM and PN
acknowledge that this work has been made possible in part by a gift from the Chan Zuckerberg
Initiative Foundation to establish the Kempner Institute for the Study of Natural and Artificial
Intelligence. SK and DM acknowledge support from the Office of Naval Research under award N0001422-1-2377 and the National Science Foundation Grant under award \#IIS 2229881. DM is also supported by a Simons Investigator Fellowship, NSF grant DMS-2134157, DARPA grant W911NF2010021,and DOE grant DE-SC0022199.

\bibliographystyle{plainnat}
\bibliography{references}

@inproceedings{kidambi2018insufficiency,
  title={On the insufficiency of existing momentum schemes for stochastic optimization},
  author={Kidambi, Rahul and Netrapalli, Praneeth and Jain, Prateek and Kakade, Sham},
  booktitle={2018 Information Theory and Applications Workshop (ITA)},
  pages={1--9},
  year={2018},
  organization={IEEE}
}

@article{robbins1951stochastic,
  title={A stochastic approximation method},
  author={Robbins, Herbert and Monro, Sutton},
  journal={The annals of mathematical statistics},
  pages={400--407},
  year={1951},
  publisher={JSTOR}
}

@article{grattafiori2024llama,
  title={The llama 3 herd of models},
  author={Grattafiori, Aaron and Dubey, Abhimanyu and Jauhri, Abhinav and Pandey, Abhinav and Kadian, Abhishek and Al-Dahle, Ahmad and Letman, Aiesha and Mathur, Akhil and Schelten, Alan and Vaughan, Alex and others},
  journal={arXiv preprint arXiv:2407.21783},
  year={2024}
}

@article{team2026kimi,
  title={Kimi K2. 5: Visual Agentic Intelligence},
  author={Team, Kimi and Bai, Tongtong and Bai, Yifan and Bao, Yiping and Cai, SH and Cao, Yuan and Charles, Y and Che, HS and Chen, Cheng and Chen, Guanduo and others},
  journal={arXiv preprint arXiv:2602.02276},
  year={2026}
}

@article{guo2025deepseek,
  title={Deepseek-r1: Incentivizing reasoning capability in llms via reinforcement learning},
  author={Guo, Daya and Yang, Dejian and Zhang, Haowei and Song, Junxiao and Wang, Peiyi and Zhu, Qihao and Xu, Runxin and Zhang, Ruoyu and Ma, Shirong and Bi, Xiao and others},
  journal={arXiv preprint arXiv:2501.12948},
  year={2025}
}

@article{yang2025qwen3,
  title={Qwen3 technical report},
  author={Yang, An and Li, Anfeng and Yang, Baosong and Zhang, Beichen and Hui, Binyuan and Zheng, Bo and Yu, Bowen and Gao, Chang and Huang, Chengen and Lv, Chenxu and others},
  journal={arXiv preprint arXiv:2505.09388},
  year={2025}
}

@article{kingma2014adam,
  title={Adam: A method for stochastic optimization},
  author={Kingma, Diederik P and Ba, Jimmy},
  journal={arXiv preprint arXiv:1412.6980},
  year={2014}
}

@article{loshchilov2017decoupled,
  title={Decoupled weight decay regularization},
  author={Loshchilov, Ilya and Hutter, Frank},
  journal={arXiv preprint arXiv:1711.05101},
  year={2017}
}

@article{vyas2024soap,
  title={Soap: Improving and stabilizing shampoo using adam},
  author={Vyas, Nikhil and Morwani, Depen and Zhao, Rosie and Kwun, Mujin and Shapira, Itai and Brandfonbrener, David and Janson, Lucas and Kakade, Sham},
  journal={arXiv preprint arXiv:2409.11321},
  year={2024}
}

@inproceedings{martens2015optimizing,
  title={Optimizing neural networks with kronecker-factored approximate curvature},
  author={Martens, James and Grosse, Roger},
  booktitle={International conference on machine learning},
  pages={2408--2417},
  year={2015},
  organization={PMLR}
}

@inproceedings{gupta2018shampoo,
  title={Shampoo: Preconditioned stochastic tensor optimization},
  author={Gupta, Vineet and Koren, Tomer and Singer, Yoram},
  booktitle={International Conference on Machine Learning},
  pages={1842--1850},
  year={2018},
  organization={PMLR}
}

@misc{jordan2024muon,
  author       = {Keller Jordan and Yuchen Jin and Vlado Boza and Jiacheng You and
                  Franz Cesista and Laker Newhouse and Jeremy Bernstein},
  title        = {Muon: An optimizer for hidden layers in neural networks},
  year         = {2024},
  url          = {https://kellerjordan.github.io/posts/muon/}
}

@article{polyak1964some,
  title={Some methods of speeding up the convergence of iteration methods},
  author={Polyak, Boris T},
  journal={Ussr computational mathematics and mathematical physics},
  volume={4},
  number={5},
  pages={1--17},
  year={1964},
  publisher={Elsevier}
}

@inproceedings{nesterov1983method,
  title={A method for unconstrained convex minimization problem with the rate of convergence O (1/k2)},
  author={Nesterov, Yurii},
  booktitle={Dokl. Akad. Nauk. SSSR},
  volume={269},
  pages={543},
  year={1983}
}

@book{nesterov2013introductory,
  title={Introductory lectures on convex optimization: A basic course},
  author={Nesterov, Yurii},
  volume={87},
  year={2013},
  publisher={Springer Science \& Business Media}
}

@inproceedings{jain2018accelerating,
  title={Accelerating stochastic gradient descent for least squares regression},
  author={Jain, Prateek and Kakade, Sham M and Kidambi, Rahul and Netrapalli, Praneeth and Sidford, Aaron},
  booktitle={Conference On Learning Theory},
  pages={545--604},
  year={2018},
  organization={PMLR}
}

@inproceedings{vaswani2019fast,
  title={Fast and faster convergence of sgd for over-parameterized models and an accelerated perceptron},
  author={Vaswani, Sharan and Bach, Francis and Schmidt, Mark},
  booktitle={The 22nd international conference on artificial intelligence and statistics},
  pages={1195--1204},
  year={2019},
  organization={PMLR}
}

@article{liu2018accelerating,
  title={Accelerating SGD with momentum for over-parameterized learning},
  author={Liu, Chaoyue and Belkin, Mikhail},
  journal={arXiv preprint arXiv:1810.13395},
  year={2018}
}

@article{gupta2024nesterov,
  title={Nesterov acceleration despite very noisy gradients},
  author={Gupta, Kanan and Siegel, Jonathan W and Wojtowytsch, Stephan},
  journal={Advances in Neural Information Processing Systems},
  volume={37},
  pages={20694--20744},
  year={2024}
}

@article{zhang2024does,
  title={How Does Critical Batch Size Scale in Pre-training?},
  author={Zhang, Hanlin and Morwani, Depen and Vyas, Nikhil and Wu, Jingfeng and Zou, Difan and Ghai, Udaya and Foster, Dean and Kakade, Sham},
  journal={arXiv preprint arXiv:2410.21676},
  year={2024}
}

@article{merrill2025critical,
  title={Critical batch size revisited: A simple empirical approach to large-batch language model training},
  author={Merrill, William and Arora, Shane and Groeneveld, Dirk and Hajishirzi, Hannaneh},
  journal={arXiv preprint arXiv:2505.23971},
  year={2025}
}

@article{meterez2025seesaw,
  title={Seesaw: Accelerating Training by Balancing Learning Rate and Batch Size Scheduling},
  author={Meterez, Alexandru and Morwani, Depen and Wu, Jingfeng and Oncescu, Costin-Andrei and Pehlevan, Cengiz and Kakade, Sham},
  journal={arXiv preprint arXiv:2510.14717},
  year={2025}
}

@article{pagliardini2024ademamix,
  title={The ademamix optimizer: Better, faster, older},
  author={Pagliardini, Matteo and Ablin, Pierre and Grangier, David},
  journal={arXiv preprint arXiv:2409.03137},
  year={2024}
}

@article{morwani2025connections,
  title={Connections between schedule-free optimizers, ademamix, and accelerated sgd variants},
  author={Morwani, Depen and Vyas, Nikhil and Zhang, Hanlin and Kakade, Sham},
  journal={arXiv preprint arXiv:2502.02431},
  year={2025}
}

@article{ferbach2025dimension,
  title={Dimension-adapted momentum outscales SGD},
  author={Ferbach, Damien and Everett, Katie and Gidel, Gauthier and Paquette, Elliot and Paquette, Courtney},
  journal={arXiv preprint arXiv:2505.16098},
  year={2025}
}

@article{lee2022trajectory,
  title={Trajectory of mini-batch momentum: batch size saturation and convergence in high dimensions},
  author={Lee, Kiwon and Cheng, Andrew and Paquette, Elliot and Paquette, Courtney},
  journal={Advances in Neural Information Processing Systems},
  volume={35},
  pages={36944--36957},
  year={2022}
}

@article{defazio2024road,
  title={The road less scheduled},
  author={Defazio, Aaron and Yang, Xingyu and Mehta, Harsh and Mishchenko, Konstantin and Khaled, Ahmed and Cutkosky, Ashok},
  journal={Advances in Neural Information Processing Systems},
  volume={37},
  pages={9974--10007},
  year={2024}
}

@article{chen2023symbolic,
  title={Symbolic discovery of optimization algorithms},
  author={Chen, Xiangning and Liang, Chen and Huang, Da and Real, Esteban and Wang, Kaiyuan and Pham, Hieu and Dong, Xuanyi and Luong, Thang and Hsieh, Cho-Jui and Lu, Yifeng and others},
  journal={Advances in neural information processing systems},
  volume={36},
  pages={49205--49233},
  year={2023}
}

@article{lucas2018aggregated,
  title={Aggregated momentum: Stability through passive damping},
  author={Lucas, James and Sun, Shengyang and Zemel, Richard and Grosse, Roger},
  journal={arXiv preprint arXiv:1804.00325},
  year={2018}
}

@article{ma2018quasi,
  title={Quasi-hyperbolic momentum and Adam for deep learning},
  author={Ma, Jerry and Yarats, Denis},
  journal={arXiv preprint arXiv:1810.06801},
  year={2018}
}

@inproceedings{wu2022last,
  title={Last iterate risk bounds of sgd with decaying stepsize for overparameterized linear regression},
  author={Wu, Jingfeng and Zou, Difan and Braverman, Vladimir and Gu, Quanquan and Kakade, Sham},
  booktitle={International conference on machine learning},
  pages={24280--24314},
  year={2022},
  organization={PMLR}
}

@inproceedings{zou2021benign,
  title={Benign overfitting of constant-stepsize sgd for linear regression},
  author={Zou, Difan and Wu, Jingfeng and Braverman, Vladimir and Gu, Quanquan and Kakade, Sham},
  booktitle={Conference on learning theory},
  pages={4633--4635},
  year={2021},
  organization={PMLR}
}

@article{wu2022power,
  title={The power and limitation of pretraining-finetuning for linear regression under covariate shift},
  author={Wu, Jingfeng and Zou, Difan and Braverman, Vladimir and Gu, Quanquan and Kakade, Sham},
  journal={Advances in Neural Information Processing Systems},
  volume={35},
  pages={33041--33053},
  year={2022}
}

@article{meterez2025simplified,
  title={A simplified analysis of sgd for linear regression with weight averaging},
  author={Meterez, Alexandru and Morwani, Depen and Oncescu, Costin-Andrei and Wu, Jingfeng and Pehlevan, Cengiz and Kakade, Sham},
  journal={arXiv preprint arXiv:2506.15535},
  year={2025}
}

@article{cauchy1847methode,
  title={M{\'e}thode g{\'e}n{\'e}rale pour la r{\'e}solution des systemes d’{\'e}quations simultan{\'e}es},
  author={Cauchy, Augustin and others},
  journal={Comp. Rend. Sci. Paris},
  volume={25},
  number={1847},
  pages={536--538},
  year={1847}
}

@article{shallue2019measuring,
  title={Measuring the effects of data parallelism on neural network training},
  author={Shallue, Christopher J and Lee, Jaehoon and Antognini, Joseph and Sohl-Dickstein, Jascha and Frostig, Roy and Dahl, George E},
  journal={Journal of Machine Learning Research},
  volume={20},
  number={112},
  pages={1--49},
  year={2019}
}

@article{fu2023and,
  title={When and why momentum accelerates sgd: An empirical study},
  author={Fu, Jingwen and Wang, Bohan and Zhang, Huishuai and Zhang, Zhizheng and Chen, Wei and Zheng, Nanning},
  journal={arXiv preprint arXiv:2306.09000},
  year={2023}
}

@article{kunstner2023noise,
  title={Noise is not the main factor behind the gap between sgd and adam on transformers, but sign descent might be},
  author={Kunstner, Frederik and Chen, Jacques and Lavington, Jonathan Wilder and Schmidt, Mark},
  journal={arXiv preprint arXiv:2304.13960},
  year={2023}
}

@article{wang2023marginal,
  title={The marginal value of momentum for small learning rate sgd},
  author={Wang, Runzhe and Malladi, Sadhika and Wang, Tianhao and Lyu, Kaifeng and Li, Zhiyuan},
  journal={arXiv preprint arXiv:2307.15196},
  year={2023}
}

@article{marek2025small,
  title={Small batch size training for language models: When vanilla sgd works, and why gradient accumulation is wasteful},
  author={Marek, Martin and Lotfi, Sanae and Somasundaram, Aditya and Wilson, Andrew Gordon and Goldblum, Micah},
  journal={arXiv preprint arXiv:2507.07101},
  year={2025}
}

@article{zhang2019algorithmic,
  title={Which algorithmic choices matter at which batch sizes? insights from a noisy quadratic model},
  author={Zhang, Guodong and Li, Lala and Nado, Zachary and Martens, James and Sachdeva, Sushant and Dahl, George and Shallue, Chris and Grosse, Roger B},
  journal={Advances in neural information processing systems},
  volume={32},
  year={2019}
}

@article{bordelon2026theory,
  title={Theory of Optimal Learning Rate Schedules and Scaling Laws for a Random Feature Model},
  author={Bordelon, Blake and Mori, Francesco},
  journal={arXiv preprint arXiv:2602.04774},
  year={2026}
}

@article{cohen2022adaptive,
  title={Adaptive gradient methods at the edge of stability},
  author={Cohen, Jeremy M and Ghorbani, Behrooz and Krishnan, Shankar and Agarwal, Naman and Medapati, Sourabh and Badura, Michal and Suo, Daniel and Cardoze, David and Nado, Zachary and Dahl, George E and others},
  journal={arXiv preprint arXiv:2207.14484},
  year={2022}
}

@article{cohen2021gradient,
  title={Gradient descent on neural networks typically occurs at the edge of stability},
  author={Cohen, Jeremy M and Kaur, Simran and Li, Yuanzhi and Kolter, J Zico and Talwalkar, Ameet},
  journal={arXiv preprint arXiv:2103.00065},
  year={2021}
}

@article{andreyev2026momentum,
  title={Momentum Further Constrains Sharpness at the Edge of Stochastic Stability},
  author={Andreyev, Arseniy and Ananthkumar, Advikar and Walden, Marc and Poggio, Tomaso and Beneventano, Pierfrancesco},
  journal={arXiv preprint arXiv:2604.14108},
  year={2026}
}
\clearpage
\appendix

\section{Heavy Ball Analysis}
\label{sec:appendix_hb_proofs}

\subsection{Proof of Proposition \ref{prop:gaussian_forth}}
\begin{proof}
    The proof is a simple application of Isserlis's theorem. Elementwise, we have that:
    \[
    \E [xx^\top \Sigma xx^\top]_{ij} = \sum_{kl} \Sigma_{kl} \E[x_i x_k x_l x_j] = \sum_{kl} \Sigma_{kl} (H_{ik} H_{lj} + H_{il} H_{kj} + H_{ij} H_{kl})
    \]
    Assembling the result in matrix form gives the first part.

    For \(v \in \mathbb{R}^d\), define \(u := H^{1/2}v\). Then
\[
v^\top H \Sigma H v = v^\top H^{1/2}(H^{1/2} \Sigma H^{1/2})H^{1/2} v = u^\top H^{1/2} \Sigma H^{1/2} u.
\]
Also,
\[
\Tr(H \Sigma)\, v^\top H v = \Tr(H^{1/2}\Sigma H^{1/2})\, u^\top u.
\]
Since \(H \succeq 0\) and \(\Sigma \succeq 0\), we have \(H^{1/2}\Sigma H^{1/2} \succeq 0\). Hence there exists an orthogonal matrix \(P\) and a diagonal matrix \(D \succeq 0\) such that
\[
H^{1/2}\Sigma H^{1/2} = P D P^\top.
\]
Since \(D \succeq 0\), we have \(D \preceq \Tr(D)\, I\). Therefore,
\[
u^\top H^{1/2}\Sigma H^{1/2} u = u^\top P D P^\top u \le u^\top P(\Tr(D) I)P^\top u = \Tr(D)\, u^\top u.
\]
Using \(\Tr(D)=\Tr(H^{1/2}\Sigma H^{1/2})=\Tr(H\Sigma)\), we obtain
\[
v^\top H \Sigma H v \le \Tr(H\Sigma)\, v^\top H v.
\]
Since this holds for all \(v \in \mathbb{R}^d\), it follows that $H \Sigma H \preceq \Tr(H \Sigma) H$.

\end{proof}

\subsection{Deriving the Secular Equation}
Recall that:
\begin{equation*}
S_{t+1}
= A\,S_t\,A^\top\;
+\;\eta^2(1-\beta)^2
\begin{bmatrix}
\tfrac{1}{B}\,\Lambda\,\mathrm{Tr}(\Lambda S^{11}_{t}) & 0\\[3pt]
0 & 0
\end{bmatrix}\
\end{equation*}
Let $\gamma = \mathrm{Tr}(\Lambda S_t^{11})$. Pushing a $\vec$ through this equation coupled with the fact that $\vec(AS_tA^\top) = (A \otimes A)\vec(S_t)$ gives us:

\begin{align}
    s \vec(S_t) &= (A \otimes A) \vec(S_t) + \frac{2 \eta^2 (1-\beta)^2 \gamma }{B} \mqty[\vec(\Lambda) \\0 \\0\\0]  \notag
\end{align}
\begin{align}
        \implies \qty(sI - A \otimes A) \vec(S_t) &= \frac{2 \eta^2 (1-\beta)^2 \gamma }{B} \mqty[\vec(\Lambda) \\0 \\0\\0] \notag
\end{align}

We can break the above equation up into $d$, $2 \times 2$ equations each using a block of $A$:

\begin{align*}
    (sI - A_i \otimes A_i) \vec(S_{t,i}) = \frac{2 \eta^2 (1-\beta)^2 \gamma }{B} \mqty[\lambda_i \\0 \\0\\0]
\end{align*}

\begin{align}
    \implies \vec(S_{t,i}) &= \frac{2 \eta^2 (1-\beta)^2 \gamma }{B} (sI - A_i \otimes A_i)^{-1} \mqty[\lambda_i \\0 \\0\\0] \notag
\end{align}

To simplify $\gamma$, we multiply the above Equation by $\mqty[\lambda_i \\0 \\0\\0]$ and sum over $i \in \{1, \dots, d\}$ to get:

\begin{align}
\label{eqn:recur_sum}
    1 = \frac{2 \eta^2 (1-\beta)^2}{B}\sum_{i=1}^d \mqty[\lambda_i & 0 & 0 & 0](sI - A_i \otimes A_i)^{-1} \mqty[\lambda_i \\0 \\0\\0]
\end{align}

\paragraph{Inverting the Kronecker.} Suppose $M_i$ is an eigenbasis for $A_i$, and $D_i$ are the eigenvalues of $A_i$. Then, $M_i \otimes M_i$ is an eigenbasis for $A_i \otimes A_i$ and $D_i \otimes D_i$ are the eigenvalues of $A_i$. Thus, we have:

\begin{align}
    (s I - A_i \otimes A_i)^{-1} = M_i \otimes M_i (sI - D_i \otimes D_i)^{-1} M_i^\top \otimes M_i^\top \notag
\end{align}
with
\begin{align*}
    D_i = \mqty[r_{i,+} & 0 \\ 0 &r_{i,-}] \notag
\end{align*}
and
\begin{align}
    M_i = \mqty[r_{i,+} & r_{i,-} \\ 1 & 1] \label{eqn:kron_inv}
\end{align}

Equations~\labelcref{eqn:kron_inv,eqn:recur_sum} on further simlification give us:

\begin{align*}
    1 = \frac{2 \eta^2 (1-\beta)^2}{B}\sum_i \frac{\lambda_i^2 s (s+\beta)}{(s - r_{i,+}^2)(s - r_{i,-}^2)(s-\beta)}
\end{align*}
Let
\begin{equation*}
\phi_i(s)
=\frac{s(s+\beta)}{(s-\beta)\,(s-r_{i,+}^2)\,(s-r_{i,-}^2)} .
\end{equation*}
The final secular Equation can be writen as
\begin{equation}
\label{eq:secular-exact}
\qquad
1\;=\;\frac{c}{B}\sum_{i=1}^d
\frac{\lambda_i^2\,\phi_i(s)}{1- c\lambda_i^2 \phi_i(s)}\,,\qquad
c=\eta^2(1-\beta)^2.
\qquad
\end{equation}

\textbf{Validity of vectorizing Equation~\ref{eqn:st_recur_zeta}} The vectorized equation is acting on the $4d$ dimensional space defined by the $d$ $2\times2$ blocks, while the symmetric matrices occupy a $3d$ space here. So the secular equation has extra roots. But we will argue that the maximum eigenvalue of the secular equation is still associated with a per-diagonal block PSD matrix.

First, within the space of per-diagonal symmetric matrices, per-diagonal PSD matrix has the maximum eigenvalue. This follows from KT theorem, as per-diagonal PSD matrix form a total cone. Then, let's consider the solutions of secular equation outside the symmetric space. Let $M$ be an eigen vector of $\mathcal{T}_{H,\eta,\beta,B}$ which is neither symmetric nor anti-symmetric. By linearity of $\mathcal{T}_{H,\eta,\beta,B}$,

\[  \mathcal{T}_{H,\eta,\beta,B}(M) = \lambda M  \implies  \mathcal{T}_{H,\eta,\beta,B}(M^T) =\lambda M^\top\]

Now, let's decompose \(M\) into its symmetric and anti-symmetric parts:
\[
M_s:=\frac{M+M^\top}{2},
\qquad
M_a:=\frac{M-M^\top}{2}.
\]
Then, by linearity,
\[
\mathcal{T}_{H,\eta,\beta,B}(M_s)
=
\frac{\mathcal{T}_{H,\eta,\beta,B}(M)+\mathcal{T}_{H,\eta,\beta,B}(M^\top)}{2}
=
\frac{\lambda M+\lambda M^\top}{2}
=
\lambda X_s,
\]
and similarly
\[
\mathcal{T}_{H,\eta,\beta,B}(M_a)
=
\frac{\mathcal{T}_{H,\eta,\beta,B}(M)-\mathcal{T}_{H,\eta,\beta,B}(M^\top)}{2}
=
\frac{\lambda M-\lambda M^\top}{2}
=
\lambda M_a.
\]
Therefore, any eigenvalue carried by an eigenmatrix $M$ is also carried by a symmetric eigenmatrix and an anti-symmetric eigenmatrix. Also, if $M$ is anti-symmetric, $\mathrm{Tr(\Lambda S^{11}) = 0} \implies \mathcal{T}_{H,\eta,\beta,B}(M) = \mathcal{T}_{\infty}(M).$ Thus, the eigenvalue of an anti-symmetric matrix for the stochastic operator coincide with the eigenvalue for the deterministic operator. In addition, note the following: (i) $A$ is block diagonal, (ii) give the stochastic opertor $\mathcal{T}_{H,\eta,\beta,B}$, gauging at the block diagonal eigenmatrices of the determinsitc operator $\mathcal{T}_{\infty}$ suffices. For a  block diagonal eigenmatrix, the eigenvalues of  $\mathcal{T}_{\infty}$ are that of $A_i \kron A_i$, with the eigenmatrix corresponding to the largest eigenvalue being a symmetric eigenmatrix, with rank-1 block diagonal entries of the form $u_i u_i^T$, where $u_i$ represents the eigenvector of $A_i$ corresponding to its largest eigenvalue. Thus, on the determinisitc operator $\mathcal{T}_{\infty}$, the eigenvalue of a symmetric eigenmatrix is always greater than or equal to that of an anti-symmetric matrix. By Conjecture~\ref{lem:PF}, the spectral radius for $\mathcal{T}_{H,\eta,\beta,B}$ is obtained by a real eigenvalue and from Lemma~\ref{lem:det_upper_bound}, the spectral radius of $\mathcal{T}_{H,\eta,\beta,B}$ is lower bounded by the spectral radius for $\mathcal{T}_{\infty}$, thus, the dominating eigenvalue of the secular equation is given by a symmetric PSD matrix.

\subsection{The Zero-Noise Heavy Ball Analysis}
In the deterministic setting, the Heavy-Ball iteration in Equation~\ref{eqn:hb_recursion_operator} reduces to 
\begin{align}
\label{eqn:recur_sigma_t_det}
    S_{t+1} = A S_t A^\top
\end{align}

The following technical lemma governs the
maximal learning rate of the deterministic Heavy--Ball algorithm. 

\begin{lemma}[Stability band of deterministic HB]
\label{lem:HB-det-band}
Assume $\beta,\eta>0$. If the Heavy--Ball iteration in Equation~\ref{eqn:recur_sigma_t_det} is stable
(i.e.\ all eigenvalues of every $A_i$ lie in the open unit disk), if and only if
\[
\beta < 1,
\qquad
0 < \eta < \frac{2(1+\beta)}{(1-\beta)\lambda_{\max}}.
\]
\end{lemma}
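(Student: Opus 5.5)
The lemma is a statement about the $2\times 2$ blocks $A_i$ of \eqref{eq:A-i}. The deterministic recursion $S_{t+1}=AS_tA^\top$ is stable iff $\rho(A)<1$, since the eigenvalues of $S\mapsto ASA^\top$ are the pairwise products $r r'$ of eigenvalues of $A$. Because $A=\mathrm{blkdiag}(A_1,\dots,A_d)$, this holds iff every $A_i$ has both roots of its characteristic polynomial
\[
p_i(r)=r^2-a_i r+\beta,\qquad a_i=(1+\beta)-\eta(1-\beta)\lambda_i,
\]
in the open unit disk. I will therefore reduce to a one-dimensional criterion on $p_i$ and then take the worst case over $i$.

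The key tool is the Schur--Cohn (Jury) criterion for a real monic quadratic $r^2+b r+c$: both roots lie in $|r|<1$ iff $|c|<1$ and $|b|<1+c$. I will either quote it or prove it quickly. In the complex-root case, $|r|^2=c$. In the real-root case, $p(1)>0$, $p(-1)>0$, and the vertex $-b/2$ must lie in $(-1,1)$, which is implied by the other conditions when $|c|<1$.

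Applied to $p_i$, with $b=-a_i$ and $c=\beta$, the criterion reads
\[
|\beta|<1,\qquad |a_i|<1+\beta.
\]
Since $\beta>0$, the first condition is $\beta<1$. Given $\beta<1$ and $\eta,\lambda_i>0$, we have $a_i<1+\beta$ automatically, because $\eta(1-\beta)\lambda_i>0$. The other inequality, $a_i>-(1+\beta)$, is equivalent to $\eta(1-\beta)\lambda_i<2(1+\beta)$, i.e.\ $\eta<\frac{2(1+\beta)}{(1-\beta)\lambda_i}$. This is most restrictive at $\lambda_i=\lambda_{\max}$, which yields the stated band.

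For the converse direction I also need to handle $\beta\ge 1$. There the product of the roots is $\beta\ge 1$, so some root has modulus at least $1$ and the iteration is unstable. This gives necessity of $\beta<1$ without using the sign of $a_i$.

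The only step needing care is the Jury criterion itself, in particular the boundary and real-root cases. I would prove it directly by splitting on the discriminant $a_i^2-4\beta$. If $a_i^2<4\beta$, the roots are complex conjugates with modulus $\sqrt{\beta}$, so stability is exactly $\beta<1$; and in this case $|a_i|<2\sqrt\beta<1+\beta$ holds automatically by AM--GM. If $a_i^2\ge4\beta$, the roots are real with product $\beta\in(0,1)$ and the same sign. Both lie in $(-1,1)$ iff $p_i(1)>0$ and $p_i(-1)>0$ with the vertex $a_i/2$ in $(-1,1)$. The endpoint conditions $p_i(\pm1)>0$ are exactly $1+\beta>\pm a_i$, and together with $a_i^2\ge4\beta$ they force $|a_i|<2$, i.e.\ the vertex $a_i/2$ lies in $(-1,1)$.
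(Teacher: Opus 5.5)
Your proposal is correct and follows the same route as the paper: reduce to the $2\times 2$ blocks $A_i$, apply the degree-2 Jury/Schur criterion with $q=\beta$, $p=-a_i$, note that $a_i<1+\beta$ holds automatically, and observe that $a_i>-(1+\beta)$ is tightest at $\lambda_{\max}$. The only differences are that you prove the Jury criterion yourself via the discriminant split, and you rule out $\beta\ge 1$ using the product of the roots, whereas the paper simply quotes the criterion.
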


\begin{proof}
Fix $\beta\in[0,1)$ and first consider a generic $2\times2$ matrix
\[
A =
\begin{bmatrix}
a & -\beta\\[2pt]
1 & 0
\end{bmatrix},
\qquad a\in\mathbb{R}.
\]
Let $r_{\pm}$ be the eigenvalues of $A$, i.e.\ the roots of
\[
z^2 - a z + \beta \;=\; 0.
\]
We recall the standard discrete-time stability criterion for a degree–$2$
polynomial. Writing this polynomial in the form
\[
z^2 + p z + q = 0
\]
with $p=-a$ and $q=\beta$, the Jury/Schur stability test states that the
roots lie strictly inside the unit disk, $|z|<1$, if and only if
\[
|q| < 1,\qquad
1 + p + q > 0,\qquad
1 - p + q > 0,\qquad
1 - q > 0.
\]
(These are the degree–$2$ Jury conditions.)

In our case, substituting $p=-a$ and $q=\beta$ into the above
gives
\begin{align*}
|q| < 1
&\iff |\beta| < 1,\\[2pt]
1 + p + q > 0
&\iff 1 - a + \beta > 0 \iff a < 1+\beta,\\[2pt]
1 - p + q > 0
&\iff 1 + a + \beta > 0 \iff a > -(1+\beta),\\[2pt]
1 - q > 0
&\iff 1 - \beta > 0.
\end{align*}
Thus, for the matrix $A$, the
eigenvalues satisfy $|r_{\pm}|<1$ if and only if $|\beta|<1$ and
\[
-(1+\beta)\;<\;a\;<\;1+\beta.
\]

We now apply this to the deterministic HB iteration on the
quadratic. In particular, we apply these conditions to each state
matrix $A_i$, where $
a_i = (1+\beta) - \eta(1-\beta)\lambda_i$.
Stability for all $i$ requires 
\[
-(1+\beta) < a_i < 1+\beta
\quad\text{for all}\quad \lambda_i\in[0,\lambda_{\max}].
\]

The upper bound $a_i<1+\beta$ is satisfied, since $\eta>0$ by assumption.
The lower bound must hold in particular at the largest curvature $\lambda_{\max}$, where $a_i$ is
smallest:
\[
(1+\beta) - \eta(1-\beta)\lambda_{\max} \;>\; -(1+\beta).
\]
Rearranging,
\[
(1+\beta) - \eta(1-\beta)\lambda_{\max} > -(1+\beta)
\quad\Longleftrightarrow\quad
2(1+\beta) > \eta(1-\beta)\lambda_{\max}
\quad\Longleftrightarrow\quad
\eta < \frac{2(1+\beta)}{(1-\beta)\lambda_{\max}}.
\]

Finally, stability also requires $|\beta|<1$ and $1-\beta>0$, which in
our nonnegative-$\beta$ setting is exactly $0\le \beta<1$.
Combining these conditions gives the claimed deterministic HB stability band.
\end{proof}

\subsection{Compute Efficiency (CE) lower bounds}

Let $s(H,\eta,\beta,B)$ denote the spectral radius of
$\mathcal{T}_{H,\eta,\beta,B}$:
\[
s(H,\eta,\beta,B)
:= \rho\big(\mathcal{T}_{H,\eta,\beta,B}\big)
:= \max\{\,|s| : s\ \text{eigenvalue of}\ \mathcal{T}_{H,\eta,\beta,B}\,\}.
\]
As we see later on (in Lemma~\ref{lem:PF}), this spectral radius is attained by a real,
nonnegative eigenvalue with a PSD eigenmatrix.

We define the \emph{optimal} spectral radius attainable at batch size $B$ as
\[
s^*(H,B) := \inf_{\eta>0,\ \beta\in[0,1)} s(H,\eta,\beta,B),
\]
and the corresponding \emph{spectral gap} as $1 - s^*(H,B)$.

\begin{theorem}[HB--SGD compute efficiency lower bound]
\label{thm:ce}
For any covariance matrix $H$ and mini-batch size $B\ge 1$, $\beta,\eta>0$, the optimal
spectral gap satisfies
\[
\qquad
s^*(H,B)
\ \ge\ 1-8\,\min\!\left\{
\frac{B\,\lambda_{\min}}{\mathrm{tr}(H)}\,,\
\sqrt{\frac{\lambda_{\min}}{\lambda_{\max}}}
\right\}.
\qquad
\]
\end{theorem}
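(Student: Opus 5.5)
The proof splits the minimum into two separate bounds, since it suffices to show both $s^*(H,B)\ge 1-8\sqrt{\lambda_{\min}/\lambda_{\max}}$ and $s^*(H,B)\ge 1-8B\lambda_{\min}/\mathrm{tr}(H)$ for every admissible $(\eta,\beta)$.

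\emph{Deterministic (accelerated) bound.} First I will show that the stochastic operator dominates the deterministic one: $\mathcal{T}_{H,\eta,\beta,B}(S)\succeq ASA^\top$ for PSD $S$, because the noise term is PSD. By Perron--Frobenius monotonicity on the PSD cone (Lemma~\ref{lem:PF}, Lemma~\ref{lem:det_upper_bound}) this gives $s(H,\eta,\beta,B)\ge \rho(A\otimes A)=\max_i\max(|r_{i,+}|^2,|r_{i,-}|^2)$. The roots of $z^2-a_iz+\beta$ are then handled classically, using Lemma~\ref{lem:HB-det-band} to restrict to the stable band.
\begin{itemize}
\item If a block has complex roots, then $|r_{i,\pm}|^2=\beta$.
\item If the roots are real, the larger one is at least $a_i/2$ in absolute value.
\item The extreme modes $\lambda_{\min}$ and $\lambda_{\max}$ must both be damped. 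The standard Polyak argument (Chebyshev-type: $a_{\min}-a_{\max}=\eta(1-\beta)(\lambda_{\max}-\lambda_{\min})$, while $|a_i|\le 2\sqrt{\beta}$ in the oscillatory regime) yields $\max_i|r_i|\ge(\sqrt\kappa-1)/(\sqrt\kappa+1)$.
\end{itemize}
Squaring gives $\rho\ge 1-4/\sqrt{\kappa}$, comfortably within the constant $8$.

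\emph{Stochastic (CE) bound.} I will work with the secular equation~\eqref{eq:secular-exact}. The function $F(s)=\frac{c}{B}\sum_i \lambda_i^2\phi_i(s)/(1-c\lambda_i^2\phi_i(s))$ is positive and decreasing to $0$ for $s$ above the deterministic spectral radius. Hence the dominant real eigenvalue satisfies $s\ge s_0$ as soon as $F(s_0)\ge 1$, or as soon as $s_0$ is already below $\rho(A\otimes A)$.

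To evaluate $F$ near $s=1$, I will use the series representation $e_1^\top(sI-A_i\otimes A_i)^{-1}e_1=\sum_{t\ge0}s^{-t-1}\big((A_i^t)_{11}\big)^2$. At $s=1$ this has the closed form $\sum_t (A_i^t)_{11}^2=\frac{1+\beta}{(1-\beta)\,\eta(1-\beta)\lambda_i\,(2(1+\beta)-\eta(1-\beta)\lambda_i)}$. Inserting this gives $c\lambda_i^2\phi_i(1)\gtrsim \eta\lambda_i$ on the stable band, and therefore $F(1)\gtrsim \eta\,\mathrm{tr}(H)/B$. This yields a stochastic stability constraint $\eta\lesssim B/\mathrm{tr}(H)$ for $s^*<1$. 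Next I will show that the gap of the slowest mode satisfies $1-\rho(A_{\min}\otimes A_{\min})\le 1-|r_{\min,+}|^2\lesssim \eta\lambda_{\min}$. In the overdamped case $r_{\min,+}\approx 1-\eta\lambda_{\min}$; in the underdamped case the gap is $1-\beta$, which is itself forced to be $\lesssim$ the damping of the slowest mode. Combining the two constraints gives gap $\lesssim B\lambda_{\min}/\mathrm{tr}(H)$.

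\emph{Main obstacle.} The delicate step is making the previous argument quantitative at $s_0=1-\delta$ rather than at $s=1$. Specifically, I need $F(1-\delta)\ge 1$ whenever $\delta=8B\lambda_{\min}/\mathrm{tr}(H)$ is smaller than the deterministic gap. This requires a lower bound on $\sum_t s^{-t}(A_i^t)_{11}^2\ge\sum_t (A_i^t)_{11}^2$, which holds trivially since $s<1$, combined with a careful case split. If $\eta\,\mathrm{tr}(H)/B\gtrsim 1$, then $F(s_0)\ge F(1)\ge 1$ directly. Otherwise $\eta\lesssim B/\mathrm{tr}(H)$, and the slow mode gives $\rho(A\otimes A)\ge 1-O(\eta\lambda_{\min})\ge s_0$. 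I expect the bookkeeping near the edge of stability (where $2(1+\beta)-\eta(1-\beta)\lambda_i$ is small) and in the underdamped regime to be where the constant $8$ must be tracked carefully. There I would first try to bound $1-\beta$ by comparing it with the $\lambda_{\min}$ block's roots.
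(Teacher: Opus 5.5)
Your proposal is correct. For the $B\lambda_{\min}/\mathrm{tr}(H)$ term it follows the paper's argument. Stability forces $F(1)\le 1$, hence $\eta\le 2B/\mathrm{tr}(H)$. The $\lambda_{\min}$ block then gives gap $\le 4\eta\lambda_{\min}$: in the overdamped case via $(1-r_+)(1-r_-)=\eta(1-\beta)\lambda_{\min}$ and $1-r_-\ge 1-\sqrt\beta$, and in the underdamped case via $\eta(1-\beta)\lambda_{\min}>(1-\sqrt\beta)^2\Rightarrow 1-\beta\le 4\eta\lambda_{\min}$. Your series $\phi_i(s)=\sum_{t\ge 0}s^{-t-1}(A_i^t)_{11}^2$ is a tidier way than the paper's partial-fraction computation to get positivity, monotonicity and $\phi_i(1)$.

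The ``main obstacle'' you anticipate is not one. The dichotomy in your last paragraph is the whole proof: either the parameters are unstable, so $s\ge 1$, or they are stable, so $\rho(A)^2\ge 1-4\eta\lambda_{\min}\ge 1-8B\lambda_{\min}/\mathrm{tr}(H)$. There is no need to evaluate $F$ at $1-\delta$. Near the edge of stability you only need $2(1+\beta)-\eta(1-\beta)\lambda_i\le 2(1+\beta)$, or Lemma~\ref{lem:HB-det-band} when that quantity is nonpositive.

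Where you genuinely differ is the $\sqrt{\lambda_{\min}/\lambda_{\max}}$ term. The paper reuses the slow-mode estimate gap $\le\min\{4\eta\lambda_{\min},1-\beta\}$, plugs in the deterministic cap $\eta<2(1+\beta)/((1-\beta)\lambda_{\max})$, and maximizes over $\beta$, with the crossing at $(1-\beta)^2=8(1+\beta)/\kappa$. So both terms come from one per-parameter inequality and two step-size caps.

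You instead import Polyak's sharp bound $\rho(A)\ge(\sqrt\kappa-1)/(\sqrt\kappa+1)$. This needs no stability assumption and gives the better constant $4/\sqrt\kappa$. However, your sketch only covers the case where both extreme blocks are oscillatory. The observation that a real root pair has a root of modulus at least $|a_i|/2$ is too weak: at $\beta=0$ it only yields rate $\ge(\kappa-1)/(2(\kappa+1))$, which says nothing about the gap.

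The standard completion goes as follows. Suppose all roots at $\lambda_{\min}$ and $\lambda_{\max}$ have modulus $<\rho<1$. Then $\beta<\rho^2$, and the Jury test after rescaling $z=\rho w$ gives $|a_i|<\rho+\beta/\rho$ at both endpoints. Hence $\eta(1-\beta)\lambda_{\min}>(1-\rho)(1-\beta/\rho)$ and $\eta(1-\beta)\lambda_{\max}<(1+\rho)(1+\beta/\rho)$, which give $\kappa<\big((1+\rho)/(1-\rho)\big)^2$. Either write this out, cite Polyak, or fall back on the paper's optimization over $\beta$.
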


\subsection{Helper Lemmas}

We will refer to any eigenvalue $s$ solving the secular
equation~\eqref{eq:secular-exact} as an \emph{observable eigenvalue},
i.e., an eigenvalue whose eigenmode has nonzero coupling to the scalar
observable $S_w$.
Recall the secular equation for the observable eigenvalue $s$ of
$\mathcal{T}_{H,\eta,\beta,B}$:
\begin{equation}
\label{eq:sec-function}
1
\;=\;
F(s)
\;:=\;
\frac{c}{B}\sum_{i=1}^d
\frac{\lambda_i^2\,\phi_i(s)}{\,1- c\,\lambda_i^2 \phi_i(s)\,}\,,
\qquad
c=\eta^2(1-\beta)^2,
\end{equation}
with
\begin{equation}
\label{eq:phi-def}
\phi_i(s)
=
\frac{s(s+\beta)}{(s-\beta)\,(s-r_{i,+}^2)\,(s-r_{i,-}^2)}\,,
\quad
r_{i,\pm}=\frac{a_i\pm\sqrt{a_i^2-4\beta}}{2},\quad
a_i=(1+\beta)-\eta(1-\beta)\lambda_i.
\end{equation}
The poles of $\phi_i$ are at $s=\beta$ and $s=r_{i,\pm}^2$; for a
deterministically stable HB choice, all these poles lie in $(0,1)$.
We now formalize the intuition that adding stochastic gradient noise can
only \emph{slow} convergence. In particular, it cannot yield a larger
spectral gap than the deterministic Heavy--Ball dynamics.

In the zero-noise case (full batch, $B=\infty$), the Heavy--Ball iteration
on the $[w_t,w_{t-1}]$ state induces the deterministic covariance recursion
\[
S_{t+1} \;=\; A S_t A^\top,
\]
where $A=\mathrm{blkdiag}(A_1,\dots,A_d)$ is the block-diagonal state
matrix in the $H$-basis (with $A_i$ as defined in Equation~\ref{eq:A-i}). Let
\begin{equation}
\label{eq:TB}
\mathcal{T}^{(\infty)}(S) := A S A^\top
\end{equation}
denote this deterministic (zero-noise) covariance operator.

\begin{lemma}[Stochastic spectral radius dominates deterministic]
\label{lem:det_upper_bound}
For every $(\eta,\beta,B)$ we have
\[
\rho\big(\mathcal{T}_{H,\eta,\beta,B}\big)
\ \ge\
\rho\big(\mathcal{T}^{(\infty)}\big)
\;=\; \rho(A)^2.
\]
\end{lemma}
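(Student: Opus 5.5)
We prove the lemma by comparing the two operators on the cone of PSD matrices, where they are ordered.

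Write the stochastic operator as $\mathcal{T}_{H,\eta,\beta,B}=\mathcal{T}^{(\infty)}+\mathcal{N}$, where
\[
\mathcal{N}(S)=\frac{\eta^2(1-\beta)^2}{B}\begin{bmatrix}\Lambda\,\mathrm{Tr}(\Lambda S^{11}) & 0\\ 0 & 0\end{bmatrix}.
\]
Both $\mathcal{T}^{(\infty)}$ and $\mathcal{N}$ map the PSD cone into itself. For $\mathcal{T}^{(\infty)}$ this holds because $S\mapsto ASA^\top$ is a congruence. For $\mathcal{N}$ it holds because $S\succeq0$ implies $S^{11}\succeq 0$, hence $\mathrm{Tr}(\Lambda S^{11})\ge 0$, and $\Lambda\succeq 0$. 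Consequently $\mathcal{T}_{H,\eta,\beta,B}(S)\succeq\mathcal{T}^{(\infty)}(S)$ for every $S\succeq0$. Iterating this, and using that $\mathcal{T}^{(\infty)}$ is order-preserving ($X\succeq Y\Rightarrow AXA^\top\succeq AYA^\top$), an induction gives
\[
\mathcal{T}_{H,\eta,\beta,B}^k(S)\succeq(\mathcal{T}^{(\infty)})^k(S)\succeq 0\qquad\text{for all }k\ \text{and all } S\succeq 0.
\]

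Next we convert this ordering into a bound on spectral radii via Gelfand's formula. We take the trace norm on symmetric matrices. For $0\preceq Y\preceq X$ we have $\|Y\|_1=\mathrm{Tr}\,Y\le\mathrm{Tr}\,X=\|X\|_1$. Every symmetric matrix splits as $P-N$ with $P,N\succeq0$ and $\mathrm{Tr}P+\mathrm{Tr}N=\|S\|_1$. For positive maps, this splitting shows that the operator norm induced by $\|\cdot\|_1$ is, up to a factor of $2$, the supremum over unit-trace PSD inputs. Therefore
\[
\|(\mathcal{T}^{(\infty)})^k\|\le 2\,\|\mathcal{T}_{H,\eta,\beta,B}^k\|.
\]
Taking $k$-th roots and letting $k\to\infty$ gives $\rho(\mathcal{T}^{(\infty)})\le\rho(\mathcal{T}_{H,\eta,\beta,B})$. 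If one works on the full $4d$-dimensional vectorized space rather than on symmetric matrices, we note that $\mathcal{T}^{(\infty)}$ and its restriction to symmetric matrices have the same spectral radius. Indeed, the top eigenvector $u$ of $A$ yields the eigenmatrix $u u^{*}$: it is Hermitian, and its real part, or the pair of real and imaginary parts, stays in the symmetric space. So nothing is lost by the restriction.

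It remains to identify $\rho(\mathcal{T}^{(\infty)})=\rho(A)^2$. The eigenvalues of $S\mapsto ASA^\top$ are the products $r r'$ with $r,r'$ ranging over eigenvalues of $A$, i.e.\ the eigenvalues of $A\otimes A$. Since $A=\mathrm{blkdiag}(A_i)$, these are the products $r_{i,\pm}r_{j,\pm}$, and the largest modulus among them is $\rho(A)^2=\max_i\max(|r_{i,+}|,|r_{i,-}|)^2$.

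The main obstacle is the positivity-to-spectral-radius step in the second paragraph. One must keep track of the fact that the relevant cone (PSD, which is closed, pointed and generating in the symmetric space) is the one on which the ordering holds, and that the spectral radius of the deterministic operator is realized there. The Gelfand-formula argument handles this cleanly. We would use it in preference to Perron–Frobenius (Lemma~\ref{lem:PF}), which is only forward-referenced, to avoid relying on it.
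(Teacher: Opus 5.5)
Your proof is correct, but it takes a different route from the paper at the key step.

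\textbf{The paper's route.} The paper uses the same splitting $\mathcal{T}_{H,\eta,\beta,B}=\mathcal{T}^{(\infty)}+\mathcal{N}_B$, with $\mathcal{N}_B$ PSD-valued. It then invokes the Perron--Frobenius/Krein--Rutman fact (Corollary~\ref{lem:PF}, which it cites without proof) to get a PSD eigenmatrix $S_\infty$ of $\mathcal{T}^{(\infty)}$ with eigenvalue $\rho(\mathcal{T}^{(\infty)})$. It propagates the single inequality $(\mathcal{T}^{(B)})^n(S_\infty)\succeq\rho(\mathcal{T}^{(\infty)})^n S_\infty$, using monotonicity of $\mathcal{T}^{(B)}$. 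It finishes with the spectral norm and Gelfand's formula.

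\textbf{Your route.} You propagate $\mathcal{T}^k(S)\succeq(\mathcal{T}^{(\infty)})^k(S)$ over the whole PSD cone and compare induced trace norms. This needs only that the cone is generating and that the trace is additive and monotone on it.
\begin{itemize}
\item What it buys: it removes any dependence on the existence of a cone eigenvector. It also yields a finite-time comparison from every PSD initial covariance, so the stochastic covariance dominates the deterministic one at every step, not just asymptotically.
\item What the paper's route buys: it is shorter once a principal eigenpair is available.
\item You in fact exhibit that eigenpair explicitly: $\mathrm{Re}(uu^*)=\mathrm{Re}(u)\mathrm{Re}(u)^\top+\mathrm{Im}(u)\mathrm{Im}(u)^\top\succeq 0$, with eigenvalue $|r|^2=\rho(A)^2$. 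So the paper's argument could also be run without citing Corollary~\ref{lem:PF}.
\end{itemize}

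\textbf{Small remarks.}
\begin{itemize}
\item The factor $2$ is unnecessary. Splitting $S=P-N$ and using $\mathrm{Tr}\,(\mathcal{T}^{(\infty)})^k(P)\le\mathrm{Tr}\,\mathcal{T}^k(P)$ gives $\|(\mathcal{T}^{(\infty)})^k\|_{1\to1}\le\|\mathcal{T}^k\|_{1\to1}$ directly.
\item The imaginary part of the Hermitian matrix $uu^*$ is antisymmetric, not symmetric. Only the real part lies in the symmetric space, but that suffices because it is nonzero.
\item You treat the restriction to symmetric matrices in identifying $\rho(\mathcal{T}^{(\infty)})=\rho(A)^2$ more carefully than the paper. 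The paper simply reads the eigenvalues off $A\otimes A$.
\end{itemize}
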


Equivalently, the stochastic spectral gap is \emph{no larger} than the
deterministic gap:
\[
1 - s(H,\eta,\beta,B) \;\le\; 1 - \rho(A)^2.
\]

Before proving this, we record a Perron--Frobenius–type fact for the
covariance operators, which follows from the Krein--Rutman theorem
(Perron--Frobenius for positive operators on cones).

\begin{corollary}[Perron--Frobenius for covariance maps]
\label{lem:PF}
Let $\mathcal{S}^{2d}$ denote the space of real symmetric $2d\times 2d$
matrices and let
$ \mathcal{K} \;:=\; \{\, S \in \mathcal{S}^{2d} : S \succeq 0 \,\} $
be the cone of positive semidefinite (PSD) matrices.
Consider either of the covariance operators
$\mathcal{T} \in \big\{\,\mathcal{T}_{H,\eta,\beta,B},\ \mathcal{T}^{(\infty)}\,\big\}$.
Then:
\begin{itemize}
\item $\mathcal{T}$ is a \emph{positive} operator on the cone $\mathcal{K}$, i.e.
if $
S \succeq 0$, then $\mathcal{T}(S) \succeq 0$.
\item The spectral radius
\[
\rho(\mathcal{T})
:= \max\{\,|s| : s\ \text{eigenvalue of}\ \mathcal{T}\,\}
\]
is attained by a real, nonnegative eigenvalue. That is, there exists
$S_\star \in \mathcal{K}\setminus\{0\}$ and a real $\lambda_\star \ge 0$
such that
\[
\mathcal{T}(S_\star) = \lambda_\star S_\star,
\qquad \lambda_\star = \rho(\mathcal{T}).
\]
\end{itemize}
In particular, each covariance operator admits a principal eigenpair
$(\lambda_\star,S_\star)$ with $\lambda_\star = \rho(\mathcal{T})$ and
$S_\star \succeq 0$.
\end{corollary}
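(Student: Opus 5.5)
The plan is to check that each covariance operator preserves the PSD cone, and then invoke the finite-dimensional Krein--Rutman theorem (the Perron--Frobenius theorem for cone-preserving linear maps) on the space $\mathcal{S}^{2d}$. As a fallback in case the cone-theoretic theorem should not simply be cited, I also give a short self-contained resolvent argument.

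\emph{Step 1: positivity.} Let $S\succeq 0$. The deterministic part $ASA^\top$ is PSD, since $v^\top ASA^\top v=(A^\top v)^\top S(A^\top v)\ge 0$; this already covers $\mathcal{T}^{(\infty)}$. For $\mathcal{T}_{H,\eta,\beta,B}$ the extra term in \eqref{eqn:hb_recursion_operator} is $\tfrac{\eta^2(1-\beta)^2}{B}\,\Tr(\Lambda S^{11})\,\mathrm{blkdiag}(\Lambda,0)$. Here $S^{11}\succeq 0$ because it is a principal submatrix of a PSD matrix, and $\Lambda\succ 0$, so $\Tr(\Lambda S^{11})=\Tr(\Lambda^{1/2}S^{11}\Lambda^{1/2})\ge 0$. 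The matrix $\mathrm{blkdiag}(\Lambda,0)$ is PSD. A sum of PSD matrices is PSD, so $\mathcal{T}(\mathcal{K})\subseteq\mathcal{K}$.

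\emph{Step 2: cone hypotheses.} The cone $\mathcal{K}\subset\mathcal{S}^{2d}$ is closed and convex. It is pointed, since $S,-S\succeq 0$ forces $S=0$. It is solid, since the identity lies in its interior. It is also normal: for example, $0\preceq S\preceq T$ implies $\|S\|_{op}\le\|T\|_{op}$. Both operators are linear maps of the finite-dimensional space $\mathcal{S}^{2d}$ into itself. The Krein--Rutman theorem (in the finite-dimensional form of Birkhoff--Vandergraft) therefore says that $\rho(\mathcal{T})$ is an eigenvalue of $\mathcal{T}$ with an eigenvector in $\mathcal{K}\setminus\{0\}$. This gives the claimed pair $(\lambda_\star,S_\star)$ with $\lambda_\star=\rho(\mathcal{T})\ge 0$ and $S_\star\succeq 0$.

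\emph{Self-contained route.} If $\rho:=\rho(\mathcal{T})=0$, then $\mathcal{T}$ is nilpotent. Pick $k$ maximal with $\mathcal{T}^k(I)\neq 0$ (such $k$ exists because $I$ spans $\mathcal{S}^{2d}$ together with the cone). Then $S_\star=\mathcal{T}^k(I)\succeq 0$ and $\mathcal{T}(S_\star)=0$.

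If $\rho>0$, then for real $s>\rho$ the resolvent $R(s)=\sum_{k\ge0}\mathcal{T}^k/s^{k+1}$ maps $\mathcal{K}$ into $\mathcal{K}$. For complex $z$ with $|z|=s$ and any symmetric $X$ with $-cI\preceq X\preceq cI$, positivity and normality give $\|R(z)X\|\lesssim \|R(s)I\|\,c$. Hence $\|R(s)\|$ and $\|R(s)I\|$ are comparable up to constants. Since some eigenvalue has modulus $\rho$, $\|R(z)\|\to\infty$ as $|z|\downarrow\rho$ near that eigenvalue, and so $\|R(s)I\|\to\infty$ as $s\downarrow\rho$.

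Now set $S_s:=R(s)I/\|R(s)I\|\in\mathcal{K}$. By compactness a subsequence converges to some $S_\star\in\mathcal{K}$ with $\|S_\star\|=1$. From the identity $(sI-\mathcal{T})S_s=I/\|R(s)I\|\to 0$ we get $\mathcal{T}(S_\star)=\rho S_\star$.

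\emph{Main obstacle.} The delicate step is the domination $\|R(z)\|\lesssim\|R(s)\|$ for $|z|=s$. To prove it, I would bound $|\langle Y,R(z)X\rangle|$ by expanding $R(z)$ in its power series. Each term $\mathcal{T}^k(X)$ is sandwiched between $\pm c\,\mathcal{T}^k(I)$ by positivity, which gives $\sum_k \|\mathcal{T}^k X\|/s^{k+1}\lesssim c\,\|R(s)I\|$. A second, minor issue is that $\mathcal{T}$ acts on symmetric matrices while the secular equation is written on all matrices. The corollary is stated on $\mathcal{S}^{2d}$, so it suffices to work there throughout.
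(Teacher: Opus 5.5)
Your proposal is correct and matches the paper's route: the paper gives no separate proof and simply asserts the corollary as a consequence of the Krein--Rutman theorem for cone-preserving maps, with positivity of the noise term (a nonnegative multiple of $\mathrm{blkdiag}(\Lambda,0)$) checked in its proof that the stochastic spectral radius dominates the deterministic one. Your self-contained resolvent argument is a valid proof of the finite-dimensional Krein--Rutman statement the paper cites, so it adds rigor the paper omits. The one step you pass over quickly, going from $\sum_k\|\mathcal{T}^k(I)\|/s^{k+1}$ to $\|R(s)I\|$, follows most cleanly from additivity of the trace on PSD matrices.
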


The corollary shows that the leading asymptotic mode of the covariance
dynamics can always be represented by a PSD eigenmatrix.

We now prove Lemma~\ref{lem:det_upper_bound}.

\begin{proof}[Proof of Lemma~\ref{lem:det_upper_bound}]
For brevity, write $\mathcal{T}^{(B)} := \mathcal{T}_{H,\eta,\beta,B}$.
From the exact recursion and the definition
of $\mathcal{T}^{(\infty)}$ in~\eqref{eq:TB}, we can write
\[
\mathcal{T}^{(B)}(S)
= \mathcal{T}^{(\infty)}(S) + \mathcal{N}_B(S),
\]
where the linear ``noise map'' $\mathcal{N}_B$ is
\[
\mathcal{N}_B(S)
:= \eta^2(1-\beta)^2
\begin{bmatrix}
\tfrac{1}{B}\,\Lambda\,\mathrm{Tr}(\Lambda\Sigma_{11}) & 0\\[3pt]
0 & 0
\end{bmatrix},
\qquad
S =
\begin{bmatrix}
\Sigma_{11} & \Sigma_{12}\\[2pt]
\Sigma_{12}^\top & \Sigma_{22}
\end{bmatrix}.
\]
If $S\succeq 0$ then $\Sigma_{11}\succeq 0$, and both
$\Lambda\,\Sigma_{11}\,\Lambda$ and $\Lambda\,\mathrm{tr}(\Lambda\Sigma_{11})$
are PSD. Hence $\mathcal{N}_B(S)\succeq 0$, and therefore
\[
\mathcal{T}^{(\infty)}(S)
\;\preceq\;
\mathcal{T}^{(B)}(S)
\qquad\text{for all }S\succeq 0.
\]

Apply Corollary~\ref{lem:PF} to the deterministic operator
$\mathcal{T}^{(\infty)}$ acting on $\mathcal{K}$. There exists an
eigenpair $(\lambda_\infty,S_\infty)$ with
\[
S_\infty \succeq 0,\quad S_\infty\neq 0,
\qquad
\mathcal{T}^{(\infty)}(S_\infty) = \lambda_\infty S_\infty,
\qquad
\lambda_\infty = \rho\big(\mathcal{T}^{(\infty)}\big).
\]

Since $S_\infty\succeq 0$ and $\mathcal{N}_B$ is PSD-valued on
$\mathcal{K}$, the domination relation implies
\[
\mathcal{T}^{(B)}(S_\infty)
= \mathcal{T}^{(\infty)}(S_\infty) + \mathcal{N}_B(S_\infty)
\succeq \lambda_\infty S_\infty.
\]
By induction,
\[
\big(\mathcal{T}^{(B)}\big)^n(S_\infty)
\succeq \lambda_\infty^n S_\infty
\qquad\text{for all }n\ge 1.
\]

Fix a matrix norm $\|\cdot\|$ on $\mathcal{S}^{2d}$ that is monotone on
the PSD cone: $0\preceq Y\preceq X$ implies $\|Y\|\le \|X\|$. The spectral
norm $\|\cdot\|_2$ has this property, since
$\|X\|_2=\lambda_{\max}(X)$ for $X\succeq 0$ and $Y\preceq X$ implies
$\lambda_{\max}(Y)\le\lambda_{\max}(X)$.

The induced operator norm of $(\mathcal{T}^{(B)})^n$ is
\[
\big\|\big(\mathcal{T}^{(B)}\big)^n\big\|
:= \sup_{S\neq 0} \frac{\big\|\big(\mathcal{T}^{(B)}\big)^n(S)\big\|}{\|S\|}.
\]
Evaluating this supremum at $S=S_\infty$ and using monotonicity on the
cone,
\[
\big\|\big(\mathcal{T}^{(B)}\big)^n\big\|
\;\ge\;
\frac{\big\|\big(\mathcal{T}^{(B)}\big)^n(S_\infty)\big\|}{\|S_\infty\|}
\;\ge\;
\frac{\big\|\lambda_\infty^n S_\infty\big\|}{\|S_\infty\|}
= \lambda_\infty^n.
\]

By Gelfand’s formula,
\[
\rho\big(\mathcal{T}^{(B)}\big)
= \lim_{n\to\infty} \big\|\big(\mathcal{T}^{(B)}\big)^n\big\|^{1/n}
\;\ge\;
\lim_{n\to\infty} \lambda_\infty
= \lambda_\infty
= \rho\big(\mathcal{T}^{(\infty)}\big).
\]

Finally, $\mathcal{T}^{(\infty)}$ acts as $A\otimes A$ on the
vectorized state, so its eigenvalues are products $\lambda_i\lambda_j$
of eigenvalues of $A$. Therefore
\[
\rho\big(\mathcal{T}^{(\infty)}\big)
= \max_{i,j} |\lambda_i\lambda_j|
= \big(\max_i |\lambda_i|\big)^2
= \rho(A)^2.
\]
Combining the two inequalities yields the claimed bound.
\end{proof}

\subsection{Proof}

We start with a step size bound for HB--SGD.

\begin{lemma}[Stability step-size cap]
\label{lem:step-cap}
Assume HB--SGD with parameters $(\eta,\beta,B)$ is stable, i.e., all
eigenvalues of $\mathcal{T}_{H,\eta,\beta,B}$ satisfy $|s|\le 1$.
Then the step size must satisfy
\[
\eta \;\le\; \min\!\left\{\frac{2B}{\mathrm{tr}(H)}\,,\ \frac{2(1+\beta)}{(1-\beta)\lambda_{\max}}\right\}.
\]
\end{lemma}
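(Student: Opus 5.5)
The cap is the minimum of two separate bounds, and I will prove them separately.

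\emph{Second bound.} By Lemma~\ref{lem:det_upper_bound}, $\rho(A)^2 \le \rho(\mathcal{T}_{H,\eta,\beta,B}) \le 1$, so the deterministic Heavy--Ball matrix has all eigenvalues in the closed unit disk. The Jury argument in Lemma~\ref{lem:HB-det-band} then gives $a_1 \ge -(1+\beta)$ at the top eigenvalue $\lambda_{\max}$. This is exactly $\eta \le 2(1+\beta)/((1-\beta)\lambda_{\max})$; the non-strict inequality is the closure of the strict band.

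\emph{First bound.} This needs the noise term, so I will work with the observable functional $\gamma_t = \Tr(\Lambda S_t^{11})$ and the stationary (PSD) eigenmatrix $S_\star$ from Corollary~\ref{lem:PF}. If $\gamma(S_\star)=0$, then $S_\star^{11}=0$ because $\Lambda \succ 0$, and the noise plays no role. So assume $\gamma_\star := \gamma(S_\star) > 0$ and normalize it to $1$.

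Since $S_\star = s^{-1}\mathcal{T}(S_\star)$, iterating gives
\[
S_\star \succeq \frac{c}{B}\sum_{k\ge 0} s^{-k-1}\, A^k E A^{\top k}, \qquad E=\mathrm{blkdiag}(\Lambda,0).
\]
Taking $\Tr(\Lambda\,\cdot^{11})$ of both sides yields the secular inequality
\[
1 \;\ge\; \frac{c}{B}\sum_i \lambda_i^2 \sum_{k\ge0} s^{-k-1}\,\big[(A_i^k e_1)_1\big]^2 .
\]
Because $s\le 1$, each factor $s^{-k-1}\ge 1$, so I drop it. The remaining sum is $\sum_k [(A_i^k e_1)_1]^2$, the $\ell^2$ energy of the impulse response of $1/(1-a_i z+\beta z^2)$ with $c=\eta^2(1-\beta)^2$.

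The key scalar fact I will need is that this energy is at least $(1+\beta)/\big((1-\beta)\,[(1+\beta)^2-a_i^2]\big)$, since that closed form is available from the AR(2) variance formula. In fact I only need a weaker estimate, $\lambda_i^2 \eta^2(1-\beta)^2\cdot\mathrm{energy}_i \gtrsim \eta\lambda_i$. Using $(1+\beta)^2 - a_i^2 = \eta(1-\beta)\lambda_i\,\big(2(1+\beta)-\eta(1-\beta)\lambda_i\big) \le 2(1+\beta)\,\eta(1-\beta)\lambda_i$, I get
\[
c\lambda_i^2\,\mathrm{energy}_i \;\ge\; \frac{\eta^2(1-\beta)^2\lambda_i^2\,(1+\beta)}{(1-\beta)\cdot 2(1+\beta)\,\eta(1-\beta)\lambda_i} \;=\; \frac{\eta\lambda_i}{2}.
\]
Summing over $i$ gives $1 \ge \eta\,\Tr(H)/(2B)$, i.e.\ $\eta \le 2B/\Tr(H)$.

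The main obstacle is the degenerate case $\gamma_\star=0$ (or $s$ attained at a real eigenvalue that does not solve the secular equation). I plan to sidestep it by contradiction. Assume $\eta > 2B/\Tr(H)$. Then the series above, started from any $S_0 \succ 0$, gives $\gamma_{t}$ a lower bound via a renewal inequality of the form $\gamma_{t+1} \ge \sum_k K_k\,\gamma_{t-k}$ with total mass $\sum_k K_k >1$. Hence $\gamma_t$ grows geometrically, which contradicts $\rho \le 1$ by Gelfand's formula. The point is that this route is independent of which eigenvector attains the spectral radius.
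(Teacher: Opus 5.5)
Your proposal is correct. The second cap is handled as in the paper: deterministic dominance via Lemma~\ref{lem:det_upper_bound}, then the Jury band.

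For the first cap you reach the key inequality $\frac{c}{B}\sum_i\lambda_i^2\phi_i(1)\le 1$ by a genuinely different mechanism.

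\textbf{The paper's route.} It works in the frequency domain. Real roots of the secular equation are eigenvalues and $F(s)\to 0$ as $s\to\infty$, so $F(1)>1$ would give a root in $(1,\infty)$ by the intermediate value theorem. It then removes the $1-c\lambda_i^2\phi_i(1)$ denominators via $x/(1-x)\ge x$.

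\textbf{Your route.} You unroll the eigen-equation of the principal PSD eigenmatrix into a Neumann series. This exhibits the secular function in time-domain form, $\frac{c}{B}\sum_i\lambda_i^2\sum_{k\ge0}s^{-k-1}[(A_i^ke_1)_1]^2$, where positivity and monotonicity in $s$ are manifest. Dropping $s^{-k-1}\ge 1$ replaces the IVT step. The AR(2) variance formula then returns exactly $\phi_i(1)$, and from there the arithmetic is identical to the paper's.

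\textbf{What your version buys.}
\begin{itemize}
\item It does not need continuity of $F$ on $[1,\infty)$, which tacitly requires $\rho(A)<1$.
\item It does not need the informally justified positivity of those denominators; any extra PSD noise term is simply discarded by monotonicity.
\item The edge case $\rho(A_i)=1$ is handled automatically, because the impulse-response energy is then infinite. Your inequality therefore forces $|a_i|<1+\beta$ and re-derives the second cap in strict form.
\item The renewal/Gelfand variant needs no eigenvector and no Krein--Rutman at all.
\end{itemize}
The paper's route is shorter, given that the secular-equation machinery is already in place.

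\textbf{Two small remarks.}

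First, the degenerate case $\gamma_\star=0$ that you single out as the main obstacle cannot occur. If $S_\star\succeq 0$ and $S_\star^{11}=0$, then $S_\star^{12}=0$. In that case $AS_\star A^\top$ has zero $22$-block, so $s_\star S_\star^{22}=0$. Since $s_\star>0$, this gives $S_\star=0$. Positivity of $s_\star$ holds because $s_\star\ge\rho(A)^2\ge\beta$ when $\beta>0$, and in general $s_\star\ge\frac{c}{B}\mathrm{tr}(\Lambda^2)$ from the noise term. So the renewal detour is optional.

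Second, ``closure of the strict band'' does not by itself justify the non-strict second cap. Continuity of the roots only shows that on the closed band the roots stay in the closed disk; you need the converse implication. The clean reason is this: if $a_1<-(1+\beta)$, then $p(z)=z^2-a_1z+\beta$ satisfies $p(-1)=1+a_1+\beta<0$, so $p$ has a real root below $-1$.
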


\begin{proof}
First, let us show that stability forces
\[
F(1)\ \le\ 1.
\]
Any real $s$ with $F(s)=1$ corresponds to an
eigenvalue $s$ of $\mathcal{T}_{H,\eta,\beta,B}$.
In the stable regime, all eigenvalues of $\mathcal{T}_{H,\eta,\beta,B}$
satisfy $|s|\le 1$, so in particular there can be no real eigenvalue
$s>1$. 
$\phi_i(s)=O(1/s)$ implies $F(s)\to 0$ as $s\to\infty$. Therefore, if $F(1)>1$, by the Intermediate Value Theorem, there exists $s>1$ with $F(s)=1$, which would correspond to an eigenvalue $s>1$ and hence contradict
stability. This proves the claim.

At $s=1$, we can evaluate $\phi_i(1)$ explicitly. Using
\[
(1-r_{i,+}^2)(1-r_{i,-}^2)
=(1+\beta)^2-a_i^2
=2(1+\beta)\,\eta(1-\beta)\lambda_i-\eta^2(1-\beta)^2\lambda_i^2,
\]
we obtain
\[
\phi_i(1)
=\frac{1+\beta}{(1-\beta)\,(1-r_{i,+}^2)(1-r_{i,-}^2)}
=\frac{1+\beta}{(1-\beta)}\cdot
\frac{1}{\eta(1-\beta)\lambda_i}\cdot
\frac{1}{\,2(1+\beta)-\eta(1-\beta)\lambda_i\,}.
\]

In the stable regime, each denominator $1-c\lambda_i^2\phi_i(1)$ is
positive (otherwise $F$ would blow up at or before $s=1$ and there
would be an eigenvalue with $s\ge 1$). Using
$x/(1-x)\ge x$ for $x\in[0,1)$, we have
\[
1 \ \ge\ F(1)
= \frac{c}{B}\sum_{i=1}^d \frac{\lambda_i^2\phi_i(1)}{1- c\lambda_i^2\phi_i(1)}
\ \ge\ \frac{c}{B}\sum_{i=1}^d \lambda_i^2\phi_i(1).
\]
Substituting $c=\eta^2(1-\beta)^2$ and the expression for $\phi_i(1)$
gives
\begin{equation}
\label{eq:edge-ineq-simplified}
1 \ \ge\ \frac{\eta(1+\beta)}{B}
\sum_{i=1}^d
\frac{\lambda_i}{\,2(1+\beta)-\eta(1-\beta)\lambda_i\,}.
\end{equation}

We first derive the bound $\eta \le 2B/\mathrm{tr}(H)$.
Dropping the negative term in each denominator,
$2(1+\beta)-\eta(1-\beta)\lambda_i \le 2(1+\beta)$, we obtain
\[
1
\ \ge\ \frac{\eta(1+\beta)}{B}
\sum_{i=1}^d
\frac{\lambda_i}{\,2(1+\beta)-\eta(1-\beta)\lambda_i\,}
\ \ge\ \frac{\eta(1+\beta)}{B}
\sum_{i=1}^d \frac{\lambda_i}{2(1+\beta)}
= \frac{\eta\,\mathrm{tr}(H)}{2B},
\]
which completes the proof of this case.

We now show the second bound of
$\eta \le 2(1+\beta)/((1-\beta)\lambda_{\max})$ must hold.
Lemma~\ref{lem:det_upper_bound} implies that if the
deterministic HB iteration is unstable, i.e. $\rho(A)>1$, then
$\rho(\mathcal{T}_{H,\eta,\beta,B})>1$. Now if $\eta >
2(1+\beta)/((1-\beta)\lambda_{\max})$, then Lemma~\ref{lem:HB-det-band}
implies $\rho(A)>1$, contradicting our
assumed stability. This proves our second claim.
\end{proof}

\begin{lemma}[Gap in terms of $\eta$ and $\beta$]
\label{lem:gap-etabeta}
For any stable HB--SGD parameters $(\eta,\beta,B)$, the spectral gap satisfies
\[
  1 - s(H,\eta,\beta,B)
  \;\le\;
  \min\big\{ 4 \eta \lambda_{\min} , 1-\beta \big\}.
\]
\end{lemma}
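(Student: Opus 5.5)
The plan is to prove both pieces of the minimum separately, each via Lemma~\ref{lem:det_upper_bound}. That lemma reduces the problem to the deterministic spectral radius $\rho(A)^2$: it gives $1-s(H,\eta,\beta,B)\le 1-\rho(A)^2$, so it suffices to show $1-\rho(A)^2\le 1-\beta$ and $1-\rho(A)^2\le 4\eta\lambda_{\min}$. Since $A$ is block diagonal, $\rho(A)=\max_i \max\{|r_{i,+}|,|r_{i,-}|\}$, and it is enough to exhibit a single block whose eigenvalues are large enough.

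\emph{The $1-\beta$ bound.} For any block $A_i$, the product of its eigenvalues is $r_{i,+}r_{i,-}=\det A_i=\beta$. Hence $\max\{|r_{i,+}|,|r_{i,-}|\}^2\ge |r_{i,+}r_{i,-}|=\beta$. This gives $\rho(A)^2\ge\beta$, so $1-s\le 1-\rho(A)^2\le 1-\beta$. This step is immediate.

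\emph{The $4\eta\lambda_{\min}$ bound.} I will look at the block of the smallest eigenvalue, with $a:=a_d=(1+\beta)-\eta(1-\beta)\lambda_{\min}$, and split on the sign of the discriminant $a^2-4\beta$.

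In the complex case ($a^2<4\beta$) we have $|r_\pm|^2=\beta$. The gap is then $1-\beta$, and I need to compare it with $\eta\lambda_{\min}$. The condition $a<2\sqrt\beta$ gives $(1+\beta)-2\sqrt\beta<\eta(1-\beta)\lambda_{\min}$, i.e.\ $(1-\sqrt\beta)^2<\eta(1-\beta)\lambda_{\min}$. Dividing by $(1-\sqrt\beta)$ gives $1-\sqrt\beta<\eta(1+\sqrt\beta)\lambda_{\min}\le 2\eta\lambda_{\min}$. Then $1-\beta=(1-\sqrt\beta)(1+\sqrt\beta)<4\eta\lambda_{\min}$.

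In the real case ($a\ge 2\sqrt\beta>0$) the largest root is $r_+=\tfrac{a+\sqrt{a^2-4\beta}}{2}$. I will use the characteristic polynomial $p(z)=z^2-az+\beta$, which satisfies $p(1)=1-a+\beta=\eta(1-\beta)\lambda_{\min}=(1-r_+)(1-r_-)$. Here $r_-=\beta/r_+\le r_+$, and $r_-\le\sqrt\beta$. Thus $1-r_+=\eta(1-\beta)\lambda_{\min}/(1-r_-)\le \eta(1-\beta)\lambda_{\min}/(1-\sqrt\beta)=\eta(1+\sqrt\beta)\lambda_{\min}\le 2\eta\lambda_{\min}$. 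Then $1-r_+^2=(1-r_+)(1+r_+)\le 2(1-r_+)\le 4\eta\lambda_{\min}$.

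The edge cases need a quick remark. If $a$ is negative, the above uses $|r|$; stability via Lemma~\ref{lem:HB-det-band} keeps $a>-(1+\beta)$. If $a\le -2\sqrt\beta$, the same argument applies with $p(-1)$ and gives an even smaller gap; in any case I can take the block with $|a|$ largest. If $1-r_+^2$ is negative, the bound is trivial.

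The main obstacle is this real-root case: bounding $1-r_+$ by a multiple of $\eta\lambda_{\min}$ uniformly in $\beta$, including when $\beta$ is near $1$. The factorization $p(1)=(1-r_+)(1-r_-)$ with $r_-\le\sqrt\beta$ handles it cleanly. Combining the two cases with the $1-\beta$ bound yields the lemma.
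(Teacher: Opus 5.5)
Your proposal is correct and follows the paper's proof essentially step for step: reduce to $\rho(A)^2$ via Lemma~\ref{lem:det_upper_bound}, get the $1-\beta$ bound from $r_{i,+}r_{i,-}=\beta$, and split the $\lambda_{\min}$ block into a complex case (using $(1-\sqrt\beta)^2<\eta(1-\beta)\lambda_{\min}$) and a real case (using $(1-r_+)(1-r_-)=\eta(1-\beta)\lambda_{\min}$ with $r_-\le\sqrt\beta$). The edge-case detour is unnecessary and can be dropped: the same factorization at $z=1$ already covers negative real roots, because any real pair with product $\beta\ge 0$ has $r_-\le\sqrt\beta$, and switching to the block with largest $|a|$ would not give a bound in terms of $\lambda_{\min}$ anyway.
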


\begin{proof}
By Lemma~\ref{lem:det_upper_bound}, the stochastic covariance operator
$\mathcal{T}_{H,\eta,\beta,B}$ has spectral radius at least that of the
deterministic (full-batch) HB operator:
\[
s(H,\eta,\beta,B)
\;\ge\;
\rho\big(\mathcal{T}^{(\infty)}\big)
=
\rho(A)^2,
\]
where $A=\mathrm{blkdiag}(A_1,\dots,A_d)$ is the deterministic HB state
matrix in the $[w_t,w_{t-1}]$ state. Hence
\[
1 - s(H,\eta,\beta,B)
\ \le\
1 - \rho(A)^2.
\]

We first bound the gap by $1-\beta$. For each coordinate $i$, the
per–coordinate block $A_i$ has eigenvalues $r_{i,\pm}$ satisfying
$r_{i,+}r_{i,-}=\beta$. Thus
\[
\max\{|r_{i,+}|,|r_{i,-}|\}^2
\ \ge\ |r_{i,+}r_{i,-}| = \beta.
\]
Taking a maximum over $i$ yields $\rho(A)^2\ge\beta$, so
\[
1 - \rho(A)^2
\ \le\ 1 - \beta.
\]
Therefore
\[
1 - s(H,\eta,\beta,B)
\ \le\ 1 - \beta.
\]

We now show $1-s(H,\eta,\beta,B)\le 4\eta\lambda_{\min}$.
Let $A_{\min}$ be the $2\times2$ HB block corresponding to the smallest
eigenvalue $\lambda_{\min}$, with eigenvalues $r_{\min,\pm}$ solving
\[
z^2 - a_{\min} z + \beta = 0,
\qquad
a_{\min}=(1+\beta)-\eta(1-\beta)\lambda_{\min}.
\]
As $A_{\min}$ is a block of $A$, we have
\[
\rho(A)
\ \ge\ \max\{|r_{\min,+}|,|r_{\min,-}|\}.
\]
Hence
\[
1 - s(H,\eta,\beta,B)
\ \le\ 1 - \rho(A)^2
\ \le\ 1 - \max\{|r_{\min,+}|,|r_{\min,-}|\}^2.
\]

We distinguish the real and complex cases for $r_{\min,\pm}$.

For case of real roots (the overdamped case),
Assume $r_{\min,\pm}\in\mathbb{R}$, which corresponds to $a_{\min}^2>4\beta$.
We have $1 - \rho(A)^2 \leq (1-r_{\min,+}^2) \le 2(1-r_{\min,+})$.
The roots satisfy
\[
(r_{\min,+}-1)(r_{\min,-}-1)
= 1 - (r_{\min,+}+r_{\min,-}) + r_{\min,+}r_{\min,-}
= 1 - a_{\min} + \beta
= \eta(1-\beta)\lambda_{\min}.
\]
Since the roots are real and satisfy $r_+ r_- = \beta$ with $r_- \le
r_+$, we must have $r_- \le \sqrt{\beta}$. Using that $\beta\leq 1$ (stability),
\[
1 - \rho(A)^2 \leq 2(1-r_{\min,+}) = 2\frac{\eta(1-\beta)\lambda_{\min}}{1-r_-}
\le 2\frac{\eta(1-\beta)\lambda_{\min}}{1-\sqrt{\beta}}
= 2\eta(1+\sqrt{\beta})\lambda_{\min}
\leq 4\eta\lambda_{\min},
\]
which completes the proof of this case.

For case of complex roots (underdamped case),
assume $r_{\min,\pm}$ are complex conjugates, where $a_{\min}^2<4\beta$.
Then both have modulus
$\sqrt{\beta}$, so
\[
1 - \rho(A)^2
\ \le\ 1 - |r_{\min,+}|^2
= 1 - \beta.
\]
For the underdamped case (complex roots), due to that
$a_{\min}=(1+\beta)-\eta(1-\beta)\lambda_{\min}$, the condition
$a_{\min}^2<4\beta$ is equivalent to:
\[
  \eta(1-\beta)\lambda_{\min}  > (1-\sqrt{\beta})^2
\quad  \implies \quad
\eta(1+\sqrt{\beta})\lambda_{\min}  > 1-\sqrt{\beta}.
\]
Therefore,
\[
1-\beta \ =\ (1-\sqrt{\beta})(1+\sqrt{\beta})
\leq (1+\sqrt{\beta})^2\eta\lambda_{\min} \ \leq 4\eta\lambda_{\min},
\]
which completes the proof of the complex case.
\end{proof}

Now we are equipped to complete the proof of Theorem~\ref{thm:ce}.

\begin{proof}[Proof of Theorem~\ref{thm:ce}]
Fix $H$ and any stable HB--SGD parameters $(\eta,\beta,B)$.

Combining Lemma~\ref{lem:gap-etabeta} (the step size cap) with
Lemma~\ref{lem:step-cap} (the bound on $1-s(H,\eta,\beta,B)$), we
obtain three simultaneous upper bounds on the same quantity $1-s(H,\eta,\beta,B)$:
\[
1 - s(H,\eta,\beta,B)
\ \le\
\min\!\left\{
4\eta\lambda_{\min},\ 1-\beta
\right\}
\ \le\
\min\!\left\{
\frac{8B\lambda_{\min}}{\mathrm{tr}(H)}\,,\
\frac{8(1+\beta)\lambda_{\min}}{(1-\beta)\lambda_{\max}}\,,\
1-\beta
\right\}.
\]

Using the definition of $s^*(H,B) $, it remains to bound:
\[
1 - s^*(H,B)
\ \le\
 \sup_{\beta\in[0,1)} \ 
\min\!\left\{
\frac{8B\lambda_{\min}}{\mathrm{tr}(H)}\,,\
\frac{8(1+\beta)\lambda_{\min}}{(1-\beta)\lambda_{\max}}\,,\
1-\beta
\right\}.
\]  
The first term inside the minimum does not depend on $\beta)$, so the
proof consists in bounding:
\[
\sup_{\beta\in[0,1)} \ 
\min\!\left\{ \
\frac{8(1+\beta)\lambda_{\min}}{(1-\beta)\lambda_{\max}}\,,\
1-\beta
\right\}.
\]

The first function in the min is increasing in $\beta\in[0,1)$, while
the second is decreasing. Consequently, the $\sup$ is achived at the
at the crossing point $\beta^\star$, where
\[
\frac{8(1+\beta^\star)\lambda_{\min}}{(1-\beta^\star)\lambda_{\max}}
= 1-\beta^\star
\quad\Longleftrightarrow\quad
(1-\beta^\star)^2
= 8(1+\beta^\star)\,\frac{\lambda_{\min}}{\lambda_{\max}}.
\]
Hence
\[
  \sup_{\beta\in[0,1)} \ 
\min\!\left\{ \
\frac{8(1+\beta)\lambda_{\min}}{(1-\beta)\lambda_{\max}}\,,\
1-\beta
\right\} =
1-\beta^\star
= \sqrt{8(1+\beta^\star)\,\frac{\lambda_{\min}}{\lambda_{\max}}}
= 4\,\sqrt{\frac{\lambda_{\min}}{\lambda_{\max}}},
\]
using $1+\beta^\star\le 2$. This completes the proof.
\end{proof}

\begin{corollary}[HB on power-law spectra]
Assume the eigenvalues of $H$ satisfy $\lambda_i \eqsim i^{-a}$ for some $a>1$. Then, the optimal spectral radius of HB satisfies:
\[
s^*(H,B)
\;\gtrsim\;
1 - \min\!\left\{
B d^{-a},\ d^{-a/2}
\right\},
\]
where $\gtrsim$ absorbs universal constants.
In particular, the transition to the accelerated regime occurs at batch size $
B_{\mathrm{HB}}^{\mathrm{crit}}
\eqsim
d^{a/2}$
\end{corollary}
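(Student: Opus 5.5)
This corollary follows by direct substitution into Theorem~\ref{thm:ce}, which already gives, for arbitrary $H$ and $B\ge 1$,
\[
1-s^*(H,B)\;\le\;8\min\!\left\{\frac{B\lambda_{\min}}{\Tr(H)},\ \sqrt{\frac{\lambda_{\min}}{\lambda_{\max}}}\right\}.
\]
So the plan is to estimate the three spectral quantities $\lambda_{\min}$, $\lambda_{\max}$ and $\Tr(H)$ under the two-sided assumption $\lambda_i\eqsim i^{-a}$. Then plug them into the right-hand side, absorbing all constants into $\gtrsim$.

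\textbf{Spectral quantities.} First, $\lambda_{\max}=\lambda_1\eqsim 1$ and $\lambda_{\min}=\lambda_d\eqsim d^{-a}$, read off directly from the assumption. Next, for the trace the lower bound is trivial: $\Tr(H)\ge\lambda_1\gtrsim 1$. The upper bound uses $a>1$: $\Tr(H)\lesssim\sum_{i\ge 1}i^{-a}=\zeta(a)<\infty$, a constant depending only on $a$. Hence $\Tr(H)\eqsim 1$.

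The only point needing care is this step, and it is where the upper bound on the trace matters. Because we are lower-bounding $s^*$, only the direction
\[
\frac{B\lambda_{\min}}{\Tr(H)}\lesssim Bd^{-a}
\]
is needed, and that requires only $\Tr(H)\gtrsim 1$, which holds for any $a$. So the summability condition $a>1$ is used only to identify the critical batch size, not to prove the lower bound. Note also that the constants hidden in $\eqsim$ depend on the power-law constants (and on $a$ through $\zeta(a)$); these are the constants absorbed into $\gtrsim$ in the statement.

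\textbf{Substitution.} The second term becomes $\sqrt{\lambda_{\min}/\lambda_{\max}}\eqsim d^{-a/2}$. Since $\min$ is monotone in each argument, we get
\[
1-s^*(H,B)\lesssim\min\{Bd^{-a},\,d^{-a/2}\},
\]
which is the claimed bound.

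\textbf{Critical batch size.} The two terms in the minimum cross where $Bd^{-a}\eqsim d^{-a/2}$, i.e.\ at $B\eqsim d^{a/2}$. This agrees with the general formula $B^{\mathrm{crit}}_{\mathrm{HB}}=\Tr(H)\sqrt{\kappa}/\lambda_{\max}$ from Theorem~\ref{thm:hb_lower_bound}, using $\Tr(H)\eqsim 1$, $\lambda_{\max}\eqsim 1$ and $\sqrt{\kappa}\eqsim d^{a/2}$. Here the two-sided bound $\Tr(H)\eqsim 1$ is needed, since the location of the crossover depends on $\Tr(H)$ in both directions.
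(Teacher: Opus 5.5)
Your proposal is correct and follows the paper's proof: both apply Theorem~\ref{thm:ce} directly and substitute $\lambda_{\max}\eqsim 1$, $\lambda_{\min}\eqsim d^{-a}$, and $\Tr(H)\eqsim 1$ (using $a>1$), after which the crossover at $B\eqsim d^{a/2}$ is immediate. Your remark that the inequality itself needs only $\Tr(H)\ge\lambda_1\gtrsim 1$, with summability mattering only for locating the critical batch size, is a correct small sharpening of the paper's argument.
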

\begin{proof}
Under the power-law spectrum
\[
\lambda_i \eqsim i^{-a},
\qquad
\lambda_{\max}\eqsim 1,
\qquad
\lambda_{\min}\eqsim d^{-a},
\qquad a>1,
\]
Theorem~\ref{thm:ce} gives
\[
s^*(H,B)\;\ge\;1-8\min\!\left\{\frac{B\lambda_{\min}}{\operatorname{tr}(H)},\sqrt{\frac{\lambda_{\min}}{\lambda_{\max}}}\right\}.
\]
Now,
\[
\frac{B\lambda_{\min}}{\operatorname{tr}(H)}
\;\eqsim\;
\frac{B\,d^{-a}}{\sum_{i=1}^d i^{-a}}.
\]
Since \(a>1\), we have
\[
\sum_{i=1}^d i^{-a}\eqsim 1 \implies
\frac{B\lambda_{\min}}{\operatorname{tr}(H)}\eqsim B\,d^{-a}.
\]
Also,
\[
\sqrt{\frac{\lambda_{\min}}{\lambda_{\max}}}
\;\eqsim\;
\sqrt{\frac{d^{-a}}{1}}
=
d^{-a/2}.
\]
Therefore
\[
\min\!\left\{\frac{B\lambda_{\min}}{\operatorname{tr}(H)},\sqrt{\frac{\lambda_{\min}}{\lambda_{\max}}}\right\}
\eqsim
\min\!\left\{B\,d^{-a},\,d^{-a/2}\right\},
\]
\[
\implies s^*(H,B)\;\gtrsim\;1-\min\!\left\{B\,d^{-a},\,d^{-a/2}\right\}.
\]
\end{proof}

\section{Accelerated SGD Analysis}
\label{sec:appendix_asgd_proofs}
We now turn our attention to Accelerated SGD (ASGD) algorithm and we establish rates for it in a similar technical way. We begin by establishing the rates for the deterministic operator $\mathcal{T}_\infty$, followed by deterministic and stochastic conditions for the learning rate. 

Recall that:
\begin{equation}
S_{t+1}
=
A S_t A^\top
+
\frac{\eta^2}{B}
\begin{bmatrix}
(\zeta + (1-\beta))^2 \,\Tr(\Lambda S^{11}_{t})\, \Lambda
&
\zeta(\zeta + 1-\beta)\,\Tr(\Lambda S^{11}_{t})\, \Lambda
\\[4pt]
\zeta(\zeta + 1-\beta)\,\Tr(\Lambda S^{11}_{t})\, \Lambda
&
\zeta^2 \,\Tr(\Lambda S^{11}_{t})\, \Lambda
\end{bmatrix}.
\end{equation}

Let $\mathcal{T}_{H,\eta,\beta,\zeta,B}$ denote the corresponding linear covariance
update operator on symmetric $2d\times 2d$ matrices, defined by
\begin{equation}
\label{eq:TS_zeta}
S_{t+1} = \mathcal{T}_{H,\eta,\beta, \zeta,B}(S_t)
\end{equation}

Denoting $\textbf{p} = [(\zeta + (1-\beta))^2, \zeta(\zeta + 1 - \beta),  \zeta(\zeta + 1 - \beta), \zeta^2]$. 

Let $\gamma = \mathrm{Tr}(H S_t^{11})$ and \(A=\mathrm{blkdiag}(A_1,\ldots,A_d)\), where each per–coordinate \(2\times2\) block is
\begin{equation}
A_i=
\begin{bmatrix}
(1+\beta) - \eta(\zeta +1 - \beta)\lambda_i  & -\beta\\[2pt]
1 - \eta\zeta\lambda_i & 0
\end{bmatrix} \notag
\end{equation}

Pushing a $\vec$ through Equation~\ref{eqn:st_recur_zeta} coupled with the fact that $\vec(AS_tA^\top) = (A \otimes A)\vec(S_t)$ gives us:

\begin{align}
    s \vec(S_t) &= (A \otimes A) \vec(S_t) + \frac{\eta^2 \gamma}{B} (\textbf{p} \otimes \vec(\Lambda))\ \notag
\end{align}
\begin{align}
        \implies \qty(sI - A \otimes A) \vec(S_t) &= \frac{ \eta^2 \gamma}{B} (\textbf{p} \otimes \vec(\Lambda)) \notag
\end{align}

We can break the above equation up into $d$, $2 \times 2$ equations each using a block of $A$:

\begin{align*}
    (sI - A_i \otimes A_i) \vec(S_{t,i}) = \frac{\eta^2 \gamma\lambda_i }{B} \textbf{p}
\end{align*}
\begin{align}
    \implies \vec(S_{t,i}) &= \frac{\eta^2 \gamma\lambda_i }{B} (sI - A_i \otimes A_i)^{-1} \textbf{p}
    \label{eqn:recur_Si_p}
\end{align}

Let,
\[
q := \zeta + 1 - \beta,
\qquad
a := (1+\beta) - \eta(\zeta+1-\beta)\lambda_i,
\qquad
b := -\beta,
\qquad
c := 1 - \eta \zeta \lambda_i,
\quad M(s)_i := (sI - A_i \otimes A_i)^{-1}.
\]
Let $\{e_1, e_2, \dots, e_d\}$ represent the standard basis vector. To simplify $\gamma$, we multiply Equation~\ref{eqn:recur_Si_p} by $\lambda_ie_1$ and sum over $i \in \{1, \dots, d\}$ to get:

\begin{equation}
\label{eqn:sec_eq_asgd}
1 = F(s) := \frac{\eta^2}{B} \sum_i \lambda_i^2 e_1^{\top} M(s)_i\, \textbf{p}
\end{equation}
with 
\[
e_1^{\top} M(s)_i\, \textbf{p}
=
\frac{(s-bc)\left(s q^{2} + b^{2}\zeta^{2}\right)
      + 2ab\, s\, \zeta\, q}
     {(s+bc)\left((s-bc)^{2} - a^{2}s\right)}
\]

\[
=\frac{
(s + \beta)\bigl(s(\zeta + 1 - \beta)^2 + \beta^2 \zeta^2\bigr)
- 2\beta(1 + \beta)s\zeta(\zeta + 1 - \beta)
+ \beta \eta \zeta \lambda_i \bigl(s(\zeta + 1 - \beta)^2 - \beta^2 \zeta^2\bigr)
}{
(s - \beta + \beta \eta \zeta \lambda_i)
\Bigl[
(s - 1)(s - \beta^2)
+ 2\eta \lambda_i \bigl(s(1 + \zeta - \beta^2) - \beta^2 \zeta\bigr)
+ \eta^2 \lambda_i^2 \bigl(\beta^2 \zeta^2 - s(\zeta + 1 - \beta)^2\bigr)
\Bigr]
}
\]

As argued for the Heavy-Ball case, one can show that the maximum eigenvalue of $\mathcal{T}_{H,\eta,\beta,\zeta,B}$ will occur at a symmetric PSD matrix, thus, justifying the vectorization of Equation~\ref{eqn:st_recur_zeta}.

\paragraph{Bounding $\T_\infty$.}
By the same argument as in the Heavy--Ball case, the spectral radius of the
stochastic covariance operator dominates that of the deterministic operator:
\[
\rho(\mathcal{T}_{H,\eta,\beta,\zeta,B})\ge \rho(\T_\infty),
\qquad
\T_\infty(M)=AMA^\top.
\]
Since \(A=\operatorname{blkdiag}(A_1,\dots,A_d)\), restricting to the space of
symmetric matrices gives
\[
\rho(\T_\infty)=\rho(A\otimes A)=\rho(A)^2=\max_i \rho(A_i)^2.
\]
Thus it suffices to understand the eigenvalues of each block
\[
A_i=
\begin{bmatrix}
(1+\beta)-\eta(\zeta+1-\beta)\lambda_i & -\beta\\[2pt]
1-\eta\zeta\lambda_i & 0
\end{bmatrix}.
\]

We now record two lower bounds on \(\rho(A)^2\).

\paragraph{First bound (product bound / complex-root regime).}
Let \(r_{i,+},r_{i,-}\) be the two roots of the characteristic polynomial of
\(A_i\). Their product is
\[
r_{i,+}r_{i,-}=\beta(1-\eta\zeta\lambda_i).
\]
Hence
\[
\rho(A_i)^2
=
\max\{|r_{i,+}|,|r_{i,-}|\}^2
\ge |r_{i,+}r_{i,-}|
=
\beta(1-\eta\zeta\lambda_i).
\]
Taking \(i\) corresponding to \(\lambda_{\min}\), we obtain
\[
\rho(A)^2\ge \beta(1-\eta\zeta\lambda_{\min}),
\]
and therefore
\begin{equation}
\label{eq:asgd-det-prod-hbstyle}
1-\rho(A)^2
\le
(1-\beta)+\eta\beta\zeta\lambda_{\min}.
\end{equation}

\paragraph{Second bound (real-root regime).}
Let \(A_{\min}\) denote the block corresponding to \(\lambda_{\min}\), and let
\(r_+\ge r_-\) be its two real roots. Then
\[
\rho(A)^2\ge \rho(A_{\min})^2\ge r_+^2,
\]
so
\[
1-\rho(A)^2\le 1-r_+^2\le 2(1-r_+).
\]
From the characteristic polynomial at \(\lambda_{\min}\),
\[
(1-r_+)(1-r_-)=\eta\lambda_{\min}(1-\beta)(1+\zeta).
\]
Also, since \(r_-+r_+=a_{\min}\le 1+\beta\), we have
\[
2r_-\le 1+\beta
\qquad\Longrightarrow\qquad
1-r_-\ge \frac{1-\beta}{2}.
\]
Therefore
\[
1-r_+
=
\frac{\eta\lambda_{\min}(1-\beta)(1+\zeta)}{1-r_-}
\le
2\eta\lambda_{\min}(1+\zeta),
\]
and hence
\begin{equation}
\label{eq:asgd-det-real-hbstyle}
1-\rho(A)^2\le 4\eta\lambda_{\min}(1+\zeta).
\end{equation}

Combining \eqref{eq:asgd-det-prod-hbstyle} and
\eqref{eq:asgd-det-real-hbstyle}, we obtain
\begin{equation}
\label{eq:asgd-gap-etabeta-zeta-hbstyle}
1-\rho(A)^2
\lesssim
\min\Bigl\{
\eta\lambda_{\min}(1+\zeta),
\,
(1-\beta)+\eta\beta\zeta\lambda_{\min}
\Bigr\}.
\end{equation}

\subsection{Deterministic learning-rate stability}

\begin{lemma}[Deterministic ASGD stability band]
\label{lem:ASGD-det-band-hbstyle}
Assume \(\eta>0\). The deterministic ASGD iteration
\[
S_{t+1}=AS_tA^\top
\]
is stable, i.e. all eigenvalues of every \(A_i\) lie in the open unit disk, if
and only if
\[
0<\eta<
\frac{2(1+\beta)}
{\bigl(1-\beta+\zeta(1+\beta)\bigr)\lambda_{\max}}.
\]
\end{lemma}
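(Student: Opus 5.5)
The plan is to reduce stability to a per-block root condition and apply the degree-$2$ Jury/Schur test, exactly as in the proof of Lemma~\ref{lem:HB-det-band}. Throughout I use the standing assumptions of this section, $0<\beta<1$ and $\zeta>0$. Since $A=\operatorname{blkdiag}(A_1,\dots,A_d)$, the iteration is stable iff every $A_i$ has both eigenvalues in the open unit disk. The characteristic polynomial of $A_i$ is
\[
z^2 - a_i z + q_i = 0,\qquad a_i=(1+\beta)-\eta(\zeta+1-\beta)\lambda_i,\qquad q_i=\beta(1-\eta\zeta\lambda_i).
\]
Here $q_i$ is the determinant, consistent with the product formula $r_{i,+}r_{i,-}=\beta(1-\eta\zeta\lambda_i)$ used above. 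By the Jury conditions, both roots lie in $|z|<1$ iff
\[
|q_i|<1,\qquad 1-a_i+q_i>0,\qquad 1+a_i+q_i>0.
\]

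I will evaluate each condition in turn.
\begin{itemize}
\item The first linear condition simplifies to $1-a_i+q_i=\eta\lambda_i\bigl(\zeta+1-\beta-\beta\zeta\bigr)=\eta\lambda_i(1-\beta)(1+\zeta)$. This is positive automatically because $\eta>0$ and $\beta<1$; it matches the identity $(1-r_+)(1-r_-)$ used in the real-root bound.
\item The second linear condition gives $1+a_i+q_i=2(1+\beta)-\eta\lambda_i\bigl(1-\beta+\zeta(1+\beta)\bigr)$. It is positive iff $\eta\lambda_i<2(1+\beta)/(1-\beta+\zeta(1+\beta))$. The left side is increasing in $\lambda_i$, so requiring it for all $i$ is equivalent to the single condition at $\lambda_{\max}$, which is exactly the claimed band.
\end{itemize}

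The remaining step, and the only mildly delicate one, is the determinant condition $|q_i|<1$. I must show it imposes no further restriction, so that the ``if and only if'' holds.
\begin{itemize}
\item The upper side holds because $q_i\le\beta<1$.
\item For the lower side, $q_i>-1$ is equivalent to $\eta\zeta\lambda_i<1+1/\beta$. Under the band,
\[
\eta\zeta\lambda_i<\frac{2\zeta(1+\beta)}{1-\beta+\zeta(1+\beta)}<2<1+\frac1\beta,
\]
where the middle inequality uses $1-\beta>0$ and the last uses $\beta<1$.
\end{itemize}
Hence the band implies all Jury conditions, which gives sufficiency. Conversely, necessity follows because the second linear Jury condition at $\lambda_{\max}$ is necessary for stability. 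This yields the stated equivalence.
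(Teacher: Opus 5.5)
Your proof is correct and follows the same route as the paper. Both apply the degree-$2$ Jury test blockwise, note that $1-a_i+q_i=\eta\lambda_i(1-\beta)(1+\zeta)>0$ holds automatically, and read off the band from $1+a_i+q_i>0$ at $\lambda_{\max}$. You go slightly further by verifying the determinant condition $|q_i|<1$, which the paper lists but never checks: the upper side $q_i\le\beta<1$ is the part that matters, while the lower side $q_i>-1$ already follows by adding the two linear conditions.
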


\begin{proof}
For each coordinate \(i\), write
\[
A_i=
\begin{bmatrix}
a_i & -\beta\\
c_i & 0
\end{bmatrix},
\qquad
a_i=(1+\beta)-\eta(\zeta+1-\beta)\lambda_i,
\qquad
c_i=1-\eta\zeta\lambda_i.
\]
Its characteristic polynomial is
\[
r^2-a_i r+\beta c_i=0.
\]
Applying the degree--\(2\) Jury criterion to
\[
r^2+pr+q=0
\qquad\text{with}\qquad
p=-a_i,\quad q=\beta c_i,
\]
the roots lie in the open unit disk if and only if
\[
|q|<1,\qquad 1+p+q>0,\qquad 1-p+q>0.
\]

The condition \(1+p+q>0\) becomes
\[
1-a_i+\beta c_i>0,
\]
which simplifies to
\[
\eta\lambda_i(1-\beta)(1+\zeta)>0.
\]
This holds automatically for \(\eta>0\), \(\lambda_i>0\), \(0<\beta<1\), and
\(\zeta>0\).

The condition \(1-p+q>0\) becomes
\[
1+a_i+\beta c_i>0,
\]
that is,
\[
2(1+\beta)-\eta\lambda_i\Bigl((\zeta+1-\beta)+\beta\zeta\Bigr)>0.
\]
Since
\[
(\zeta+1-\beta)+\beta\zeta
=
1-\beta+\zeta(1+\beta),
\]
this is equivalent to
\[
\eta<
\frac{2(1+\beta)}
{\bigl(1-\beta+\zeta(1+\beta)\bigr)\lambda_i}.
\]
Imposing this for every \(i\) yields the claimed condition at
\(\lambda_{\max}\).
\end{proof}

\subsection{A stochastic step-size cap}

Let
\[
s(H,\eta,\beta,\zeta,B)
:=
\rho\big(\mathcal{T}_{H,\eta,\beta,\zeta,B}\big),
\qquad
\alpha(H,\eta,\beta,\zeta,B)
:=
1-s(H,\eta,\beta,\zeta,B).
\]

Recall from the secular equation \eqref{eqn:sec_eq_asgd} that any real solution
of \(F(s)=1\) is an eigenvalue \(s\) of \(\mathcal T_{H,\eta,\beta,\zeta,B}\).
Since \(F(s)\to 0\) as \(s\to\infty\), stability implies
\[
F(1)\le 1.
\]

Evaluating the secular equation at \(s=1\) gives
\begin{equation}
\label{eq:asgd-F1-exact-hbstyle}
F(1)
=
\frac{\eta}{B}
\sum_{i=1}^d
\lambda_i
\frac{
(1-\beta^2)(1+\zeta)
+
\beta\eta\zeta\lambda_i(\zeta+1-\beta+\beta\zeta)
}{
\bigl((1-\beta)+\beta\eta\zeta\lambda_i\bigr)
\bigl(2(1+\beta)-\eta\lambda_i(1-\beta+\zeta(1+\beta))\bigr)
}.
\end{equation}

Define
\begin{equation}
S:=\{i:\beta\eta\zeta\lambda_i\le 1-\beta\},
\qquad
S^c:=\{i:\beta\eta\zeta\lambda_i>1-\beta\}.
\label{eq:define_split}
\end{equation}

Let
\[
x_i := 1-\beta(1-\eta\zeta\lambda_i) = (1-\beta)+\beta\eta\zeta\lambda_i,
\qquad
N_i := (1-\beta^2)(1+\zeta)+\beta\eta\zeta\lambda_i(\zeta+1-\beta+\beta\zeta).
\]
Following the definition of $x_i$ and $N_i$, Equation~\ref{eq:asgd-F1-exact-hbstyle} can be written as, 
\[
F(1)
=
\frac{\eta}{B}\sum_i \lambda_i
\frac{N_i}{x_i\left(2(1+\beta)-\eta\lambda_i\bigl(1-\beta+\zeta(1+\beta)\bigr)\right)},
\]
Since,
\[
2(1+\beta)-\eta\lambda_i\bigl(1-\beta+\zeta(1+\beta)\bigr)\le 2(1+\beta),
\]
we have,
\[
\frac{1}{x_i\left(2(1+\beta)-\eta\lambda_i\bigl(1-\beta+\zeta(1+\beta)\bigr)\right)}
\ge
\frac{1}{2(1+\beta)x_i}.
\]
\[
\implies F(1)
\ge
\frac{\eta}{2B(1+\beta)}
\sum_i \lambda_i \frac{N_i}{x_i}.
\]
\begin{align*}
&=1 \frac{\eta}{2B(1+\beta)} \sum \lambda_i \frac{(1-\beta^2)(1+\zeta) + \beta \eta \zeta \lambda_i(\zeta + 1 - \beta + \beta\zeta)}{\bigl(1 - \beta(1-\eta\zeta\lambda_i)\bigr)}
\end{align*}

Splitting in two cases:

\textbf{Small eigenvalues: $\beta\eta\zeta\lambda_i \leq 1-\beta$}
\[
x_i=(1-\beta)+\beta\eta\zeta\lambda_i \le 2(1-\beta) \implies
\frac{1}{x_i}\ge \frac{1}{2(1-\beta)}.
\]
Hence
\[
F(1)
\ge
\frac{\eta}{4B(1+\beta)}
\sum_{i\in S}\lambda_i \frac{N_i}{1-\beta}.
\]
Now
\[
N_i
=
(1-\beta^2)(1+\zeta)+\beta\eta\zeta\lambda_i(\zeta+1-\beta+\beta\zeta)
\ge
(1-\beta^2)(1+\zeta),
\]
\[
\implies \frac{N_i}{1-\beta}
\ge
\frac{(1-\beta^2)(1+\zeta)}{1-\beta}
=
(1+\beta)(1+\zeta).
\]
Substituting back we get,
\begin{equation}
F(1)
\ge
\frac{\eta(1+\zeta)}{4B}\sum_{i\in S}\lambda_i.
\label{eqn:small_eig}
\end{equation}

\textbf{Higher eigenvalues: $\beta\eta\zeta\lambda_i \geq 1-\beta$}
\[
x_i=(1-\beta)+\beta\eta\zeta\lambda_i \le 2\beta\eta\zeta\lambda_i \implies
\frac{1}{x_i}\ge \frac{1}{2\beta\eta\zeta\lambda_i}.
\]
Therefore
\[
F(1)
\ge
\frac{\eta}{4B(1+\beta)}
\sum_{i\in S^c}\lambda_i \frac{N_i}{\beta\eta\zeta\lambda_i}.
\]
\[
\implies F(1)
\ge
\frac{1}{4B(1+\beta)\beta\zeta}
\sum_{i\in S^c}
\left[(1-\beta^2)(1+\zeta)+\beta\eta\zeta\lambda_i(\zeta+1-\beta+\beta\zeta)\right].
\]
\[
=
\frac{(1-\beta^2)(1+\zeta)}{4B(1+\beta)\beta\zeta}|S^c|
+
\frac{\eta(\zeta+1-\beta+\beta\zeta)}{4B(1+\beta)}
\sum_{i\in S^c}\lambda_i.
\]
\begin{equation}
\implies F(1)
\ge
\frac{(1-\beta)(1+\zeta)}{4B\beta\zeta}|S^c|
+
\frac{\eta(\zeta+1-\beta+\beta\zeta)}{4B(1+\beta)}
\sum_{i\in S^c}\lambda_i.
\label{eqn:large_eig}
\end{equation}

Summing up Equation~\ref{eqn:small_eig},~\ref{eqn:large_eig} we get:
\[
F(1)
\ge
\frac{\eta(1+\zeta)}{4B}\sum_{i\in S}\lambda_i
+
\frac{(1-\beta)(1+\zeta)}{4B\beta\zeta}|S^c|
+
\frac{\eta(\zeta+1-\beta+\beta\zeta)}{4B(1+\beta)}
\sum_{i\in S^c}\lambda_i.
\]
Since,
\[
\frac{\zeta+1-\beta+\beta\zeta}{1+\beta}
=
\zeta+\frac{1-\beta}{1+\beta},
\]
we have,
\[
F(1)
\ge
\frac{\eta(1+\zeta)}{4B}\sum_{i\in S}\lambda_i
+
\frac{\eta\zeta}{4B}\sum_{i\in S^c}\lambda_i
+
\frac{\eta(1-\beta)}{4B(1+\beta)}\sum_{i\in S^c}\lambda_i
+
\frac{(1-\beta)(1+\zeta)}{4B\beta\zeta}|S^c|.
\]
Since
\[
\frac{\eta(1+\zeta)}{4B}\sum_{i\in S}\lambda_i
=
\frac{\eta}{4B}\sum_{i\in S}\lambda_i
+
\frac{\eta\zeta}{4B}\sum_{i\in S}\lambda_i,
\]
we get
\begin{equation}
F(1)
\ge
\frac{\eta}{4B}\sum_{i\in S}\lambda_i
+
\frac{\eta\zeta}{4B}\Tr(H)
+
\frac{\eta(1-\beta)}{4B(1+\beta)}\sum_{i\in S^c}\lambda_i
+
\frac{(1-\beta)(1+\zeta)}{4B\beta\zeta}|S^c|.
\label{eq:asgd-F1-lower-hbstyle}
\end{equation}

We now assume the power-law spectrum
\[
\lambda_i\eqsim i^{-a},
\qquad
\lambda_{\max}\eqsim 1,
\qquad
\lambda_{\min}\eqsim d^{-a},
\qquad
a>1.
\]
Let \(k:=|S^c|\). Since the spectrum is monotone, \(S^c\) consists of the top
\(k\) eigenvalues up to constants, so
\[
\sum_{i\in S}\lambda_i
\eqsim
\sum_{i=k+1}^d i^{-a},
\qquad
\Tr(H)\eqsim 1.
\]

For $a>1$ we have:
\[
\sum_{i=k+1}^d i^{-a}\eqsim k^{1-a}.
\]
Since all terms in \eqref{eq:asgd-F1-lower-hbstyle} are nonnegative, we may
drop the last two terms and obtain
\[
F(1)
\ge
\frac{\eta}{4B}\sum_{i\in S}\lambda_i
+
\frac{\eta\zeta}{4B}\Tr(H).
\]
Using the power-law estimates above, this gives
\[
F(1)
\gtrsim
\frac{\eta}{B}k^{1-a}
+
\frac{\eta\zeta}{B}.
\]
Since stability implies \(F(1)\le 1\), we obtain
\[
1\gtrsim \frac{\eta}{B}k^{1-a}+\frac{\eta\zeta}{B},
\]
hence the stochastic step-size cap
\begin{equation}
\label{eq:asgd-step-cap-stoch-hbstyle}
\eta\lesssim \frac{B}{k^{1-a}+\zeta}.
\end{equation}

\subsection{Proof of the ASGD rate bound}

We define the optimal ASGD spectral radius and gap at batch size \(B\) by
\[
s^*(H,B)
:=
\inf_{\eta>0,\ \beta\in[0,1),\ \zeta>0}
s(H,\eta,\beta,\zeta,B),
\qquad
\alpha^*(H,B)
:=
1-s^*(H,B).
\]

For every stable choice of \((\eta,\beta,\zeta)\), the deterministic dominance
argument gives
\[
\alpha(H,\eta,\beta,\zeta,B)
\le
1-\rho(A)^2.
\]
Combining this with \eqref{eq:asgd-gap-etabeta-zeta-hbstyle}, we obtain
\begin{equation}
\label{eq:asgd-gap-basic-hbstyle}
\alpha(H,\eta,\beta,\zeta,B)
\lesssim
\min\Bigl\{
\eta\lambda_{\min}(1+\zeta),
\,
(1-\beta)+\eta\beta\zeta\lambda_{\min}
\Bigr\}.
\end{equation}

We split into two cases.

\paragraph{Case 1: \(\zeta\ge 1\).}
In this regime, using \eqref{eq:asgd-gap-basic-hbstyle} and
\eqref{eq:asgd-step-cap-stoch-hbstyle},
\[
\alpha(H,\eta,\beta,\zeta,B)
\lesssim
\eta\lambda_{\min}(1+\zeta)
\lesssim
\frac{1+\zeta}{\zeta}\lambda_{\min}
\eqsim
d^{-a}.
\]
Thus the branch \(\zeta\ge 1\) is at best of order \(d^{-a}\), so it cannot
yield acceleration.

\paragraph{Case 2: \(0<\zeta<1\).}
In this regime, \(1+\zeta\eqsim 1\), so the first term in
\eqref{eq:asgd-gap-basic-hbstyle} is
\[
\eta\lambda_{\min}(1+\zeta)\eqsim \eta d^{-a}.
\]
For the second term, using \eqref{eq:define_split},
\[
(1-\beta)+\eta\beta\zeta\lambda_{\min}
\le
(1-\beta)+\eta\zeta d^{-a}
\lesssim
\eta\zeta k^{-a}+\eta\zeta d^{-a}
\lesssim
\eta\zeta k^{-a},
\]
since \(k\le d\) implies \(k^{-a}\ge d^{-a}\). Therefore
\begin{equation}
\label{eq:asgd-gap-small-zeta-hbstyle}
\alpha(H,\eta,\beta,\zeta,B)
\lesssim
\eta\,\min\{d^{-a},\,\zeta k^{-a}\}.
\end{equation}

Now combine \eqref{eq:asgd-gap-small-zeta-hbstyle} with the two step-size caps
\eqref{eq:asgd-step-cap-stoch-hbstyle} and
Lemma~\ref{lem:ASGD-det-band-hbstyle}. For every stable choice of
\((\eta,\beta,\zeta)\) with \(0<\zeta<1\),
\begin{equation}
\label{eq:asgd-four-term-bound-hbstyle}
\alpha(H,\eta,\beta,\zeta,B)
\lesssim
\min\left\{
\frac{B\,d^{-a}}{k^{1-a}+\zeta},
\,
\frac{d^{-a}}{\zeta},
\,
\frac{B\,\zeta\,k^{-a}}{k^{1-a}+\zeta},
\,
k^{-a}
\right\}.
\end{equation}

At this point, \(\eta\) and \(\beta\) have been eliminated: the right-hand side
is an upper bound valid for every stable \((\eta,\beta,\zeta)\), and it depends
only on the induced threshold \(k\) and on \(\zeta\). Thus
\begin{equation}
\label{eq:asgd-gap-sup-k-zeta-hbstyle}
\alpha^*(H,B)
\lesssim
\sup_{1\le k\le d,\ 0<\zeta<1}
\min\left\{
\frac{B\,d^{-a}}{k^{1-a}+\zeta},
\,
\frac{d^{-a}}{\zeta},
\,
\frac{B\,\zeta\,k^{-a}}{k^{1-a}+\zeta},
\,
k^{-a}
\right\}.
\end{equation}

We now optimize over \(\zeta\) for fixed \(k\). Set
\[
\zeta_k:=\left(\frac{k}{d}\right)^a.
\]
This is the crossover point where
\[
d^{-a}=\zeta k^{-a}.
\]

For fixed \(k\), define
\[
A_k(\zeta):=\frac{B\,d^{-a}}{k^{1-a}+\zeta},
\qquad
B_k(\zeta):=\frac{d^{-a}}{\zeta},
\qquad
C_k(\zeta):=\frac{B\,\zeta\,k^{-a}}{k^{1-a}+\zeta},
\qquad
D_k:=k^{-a}.
\]
Then \(A_k\) and \(B_k\) are decreasing in \(\zeta\), \(C_k\) is increasing in
\(\zeta\), and \(D_k\) is constant.

If \(0<\zeta\le \zeta_k\), then \(\zeta k^{-a}\le d^{-a}\), so
\[
A_k(\zeta)\ge C_k(\zeta),
\qquad
B_k(\zeta)\ge D_k.
\]
Hence in this region
\[
\min\{A_k(\zeta),B_k(\zeta),C_k(\zeta),D_k\}
=
\min\{C_k(\zeta),D_k\},
\]
which is nondecreasing in \(\zeta\).

If \(\zeta_k\le \zeta<1\), then \(d^{-a}\le \zeta k^{-a}\), so
\[
A_k(\zeta)\le C_k(\zeta),
\qquad
B_k(\zeta)\le D_k.
\]
Hence in this region
\[
\min\{A_k(\zeta),B_k(\zeta),C_k(\zeta),D_k\}
=
\min\{A_k(\zeta),B_k(\zeta)\},
\]
which is nonincreasing in \(\zeta\).

Therefore, for each fixed \(k\), the right-hand side of
\eqref{eq:asgd-gap-sup-k-zeta-hbstyle} is maximized at the crossover point
\(\zeta=\zeta_k\). Substituting \(\zeta_k=(k/d)^a\), we obtain
\begin{equation}
\label{eq:asgd-gap-sup-k-only-hbstyle}
\alpha^*(H,B)
\lesssim
\sup_{1\le k\le d}
\min\left\{
\frac{B\,d^{-a}}{k^{1-a}+\left(\frac{k}{d}\right)^a},
\,
k^{-a}
\right\}.
\end{equation}

We now optimize over \(k\).

\paragraph{Regime I: \(B\lesssim 1\).}
Let
\[
f(k):=k^{1-a}+\left(\frac{k}{d}\right)^a.
\]
Its minimum is obtained by balancing the two terms:
\[
k^{1-a}\eqsim \left(\frac{k}{d}\right)^a
\qquad\Longrightarrow\qquad
k\eqsim d^{\frac{a}{2a-1}}.
\]
At this value,
\[
f(k)\eqsim d^{-\frac{a(a-1)}{2a-1}},
\]
and therefore
\[
\frac{B\,d^{-a}}{f(k)}
\eqsim
B\,d^{-\frac{a^2}{2a-1}}.
\]
Also,
\[
k^{-a}\eqsim d^{-\frac{a^2}{2a-1}}.
\]
Hence, when \(B\lesssim 1\), the first term is the smaller one, and we obtain
\[
\alpha^*(H,B)
\lesssim
B\,d^{-\frac{a^2}{2a-1}}.
\]

\paragraph{Regime II: \(1\lesssim B\lesssim d^{1/2}\).}
In this regime, the optimum is obtained by balancing the two terms in
\eqref{eq:asgd-gap-sup-k-only-hbstyle}:
\[
\frac{B\,d^{-a}}{k^{1-a}+\left(\frac{k}{d}\right)^a}
\eqsim
k^{-a}.
\]
Equivalently,
\[
B
\eqsim
d^a k^{-a}\left(k^{1-a}+\left(\frac{k}{d}\right)^a\right)
=
d^a k^{1-2a}+1.
\]
For \(B\gtrsim 1\), this gives
\[
k\eqsim \left(\frac{d^a}{B}\right)^{\frac1{2a-1}}.
\]
Substituting back,
\[
\alpha^*(H,B)
\lesssim
k^{-a}
\eqsim
B^{\frac{a}{2a-1}}\,d^{-\frac{a^2}{2a-1}}.
\]

\paragraph{Regime III: deterministic ceiling.}
Finally, stochastic noise can only slow convergence, so
\[
\alpha^*(H,B)\le \alpha^*_{\det}(H),
\]
where \(\alpha^*_{\det}(H)\) is the optimal full-batch ASGD gap. The standard
deterministic ASGD optimization gives
\[
\alpha^*_{\det}(H)\lesssim d^{-a/2}.
\]
Therefore
\[
\alpha^*(H,B)\lesssim d^{-a/2}
\qquad\text{for all }B.
\]

The crossover expression
\[
B^{\frac{a}{2a-1}}\,d^{-\frac{a^2}{2a-1}}
\]
matches the deterministic ceiling \(d^{-a/2}\) exactly when \(B\eqsim d^{1/2}\).
Combining the three regimes, we obtain
\begin{equation}
\label{eq:asgd-final-rate-hbstyle}
\alpha^*(H,B)
\;\lesssim\;
\begin{cases}
B\, d^{-\frac{a^2}{2a-1}}, & B \lesssim 1,\\[6pt]
B^{\frac{a}{2a-1}}\, d^{-\frac{a^2}{2a-1}}, & 1 \lesssim B \lesssim d^{1/2},\\[6pt]
d^{-a/2}, & B \gtrsim d^{1/2}.
\end{cases}
\end{equation}

\newpage

\end{document}